\documentclass[11pt]{article}

\usepackage[margin=1in]{geometry}
\usepackage{natbib}

\usepackage[utf8]{inputenc}
\usepackage[T1]{fontenc}
\usepackage{hyperref}
\usepackage{url}
\usepackage{booktabs}
\usepackage{amsfonts}
\usepackage{amsmath}
\usepackage{amssymb}
\usepackage{amsthm}
\usepackage{graphicx}
\usepackage{xcolor}
\usepackage{colortbl}
\usepackage{subcaption}
\usepackage{algorithm}
\usepackage{algorithmic}
\usepackage{bbm}
\usepackage{nicefrac}
\usepackage{wrapfig}

\newtheorem{theorem}{Theorem}
\newtheorem{proposition}[theorem]{Proposition}
\newtheorem{corollary}[theorem]{Corollary}
\newtheorem{lemma}[theorem]{Lemma}

\newtheorem{remark}[theorem]{Remark}

\newcommand{\R}{\mathbb{R}}
\newcommand{\E}{\mathbb{E}}

\newcommand{\dd}{\mathrm{d}}

\title{On the Role of Strain and Vorticity in \\
Numerical Integration Error for Flow Matching}

\author{
  \textbf{Chenxi Tao} \\
  Georgia Institute of Technology \\
  \texttt{ctao40@gatech.edu}
  \and
  \textbf{Seung-Kyum Choi} \\
  Georgia Institute of Technology \\
  \texttt{schoi@me.gatech.edu}
}

\date{} % Leave empty if you don't want today's date to appear automatically

\begin{document}

\maketitle

% ============================================================
\begin{abstract}
% ============================================================
Flow matching generates data by integrating a learned velocity field, where the number of integration steps (NFE) directly determines inference cost. Yet a precise understanding of \emph{which properties of the velocity field govern integration error} has been lacking. We provide such an understanding by decomposing the velocity Jacobian $\nabla_x v$ into its symmetric part $S$ (strain rate) and antisymmetric part $\Omega$ (vorticity), and proving that they play fundamentally different roles: strain controls \emph{exponential} error amplification via the logarithmic norm $\mu_2 = \lambda_{\max}(S)$, while vorticity contributes only \emph{linearly} to the local truncation error. This asymmetry has three implications. First, we derive a separated error bound showing that suppressing strain alone eliminates exponential error growth, while suppressing vorticity alone does not. Second, we prove that the optimal transport velocity field is automatically irrotational ($\Omega = 0$) and has zero material derivative, which upgrades Euler integration from first-order to second-order accuracy. For exact OT displacement interpolation, the corresponding Lagrangian particle dynamics are in fact integrated exactly by Euler; we verify this on both Gaussian and nonlinear OT flows, where errors reach machine precision ($\sim\!10^{-14}$). Third, we show that a weighted Jacobian regularizer with strain weight $\alpha$ exceeding vorticity weight $\beta$ is theoretically favored, a prediction we validate on synthetic benchmarks and probe on CIFAR-10. Experiments on 2D distributions confirm the main theoretical predictions, demonstrating up to $2.7\times$ reduction in integration error at NFE$=$5. Preliminary CIFAR-10 experiments show consistent trends, with a lightweight fine-tuning procedure yielding 14\% FID improvement at NFE$=$10 while preserving high-NFE quality. A matched fine-tuning control experiment (same training, no regularization) shows no comparable improvement, indicating that the gains are associated with Jacobian regularization rather than additional training alone. Ablations further illustrate the predicted bias-complexity tradeoff and support strain-dominant weighting in the low-dimensional setting.
\end{abstract}

% ============================================================
\section{Introduction}
\label{sec:intro}
% ============================================================

Flow Matching~\citep{lipman2023flow, liu2023rectified} has emerged as a powerful paradigm for generative modeling, training a velocity field $v_\theta(t, x)$ whose ODE integration transports noise to data. A central practical challenge is that accurate integration requires many function evaluations (high NFE), making inference slow. A variety of methods address this problem: Rectified Flow~\citep{liu2023rectified} straightens trajectories via reflow, Consistency Models~\citep{song2023consistency} enforce self-consistency, and MeanFlow~\citep{geng2025meanflow} learns an average velocity enabling one-step generation.

Despite these practical advances, a fundamental question remains: \emph{what properties of the learned velocity field determine how many integration steps are needed?} Standard numerical analysis bounds the Euler error using the Lipschitz constant $L = \sup\|\nabla_x v\|$, yielding $O(h \cdot e^{LT})$. But this bound treats all components of the Jacobian equally, potentially missing structure that could yield tighter analysis and better-targeted regularization.

In this paper, we provide a finer-grained analysis by decomposing the velocity Jacobian into its \textbf{symmetric part} $S$ (strain rate tensor) and \textbf{antisymmetric part} $\Omega$ (vorticity tensor). We prove that these two components affect integration error in fundamentally different ways:

\begin{itemize}
    \item \textbf{Strain controls exponential error amplification.} The error propagation factor is governed by the logarithmic norm $\mu_2(\nabla v) = \lambda_{\max}(S)$, which depends \emph{only} on $S$. Large strain causes errors to grow as $e^{\mu_+ T}$.
    
    \item \textbf{Vorticity contributes only linearly.} Vorticity affects the local truncation error through the term $\Omega v$, but this contribution is purely additive --- it does not enter the exponential amplification factor.
    
    \item \textbf{Suppressing both yields the tightest Euler bound.} When $S \to 0$ and $\Omega \to 0$ (the vanishing-strain-and-vorticity regime), the error collapses from $O(h \cdot e^{LT})$ to $O(hT \cdot M_t)$, where $M_t = \sup\|\partial_t v\|$.
\end{itemize}

We further connect this analysis to optimal transport theory, proving that the OT velocity field from Brenier's theorem is irrotational ($\Omega = 0$) and has zero material derivative ($Dv/Dt = 0$). The latter implies that Euler integration is automatically \emph{second-order} accurate on OT flows.

\textbf{Contributions.}
\begin{enumerate}
    \item A \textbf{separated error bound} (Theorem~\ref{thm:main}) proving the asymmetric roles of strain and vorticity in ODE integration error, together with a three-regime analysis (Corollary~\ref{cor:regimes}).
    \item Proof that the \textbf{OT velocity field is irrotational} (Theorem~\ref{thm:ot-irrotational}) and has \textbf{zero material derivative} (Theorem~\ref{thm:ot-second-order}), which yields second-order Euler accuracy in the Eulerian error analysis. For exact displacement interpolation, we further observe exact Lagrangian Euler integration on both Gaussian and nonlinear OT flows, a stronger phenomenon discussed in Remark~\ref{rem:exact-lagrangian}.
    \item A \textbf{bias-complexity tradeoff analysis} showing that strain regularization is theoretically more valuable than vorticity regularization for controlling Euler discretization error (Proposition~\ref{prop:alpha-beta}).
    \item \textbf{Experimental validation} on 2D benchmarks confirming the main theoretical predictions, together with supporting CIFAR-10 experiments that show consistent low-NFE improvements and ablations over $\alpha$, $\beta$, and fine-tuning duration.
\end{enumerate}

Figure~\ref{fig:hero} illustrates the core idea: as both strain and vorticity are suppressed, particle trajectories become progressively straighter and non-crossing, enabling accurate integration with fewer steps.

\begin{figure}[t]
\centering
\includegraphics[width=\textwidth]{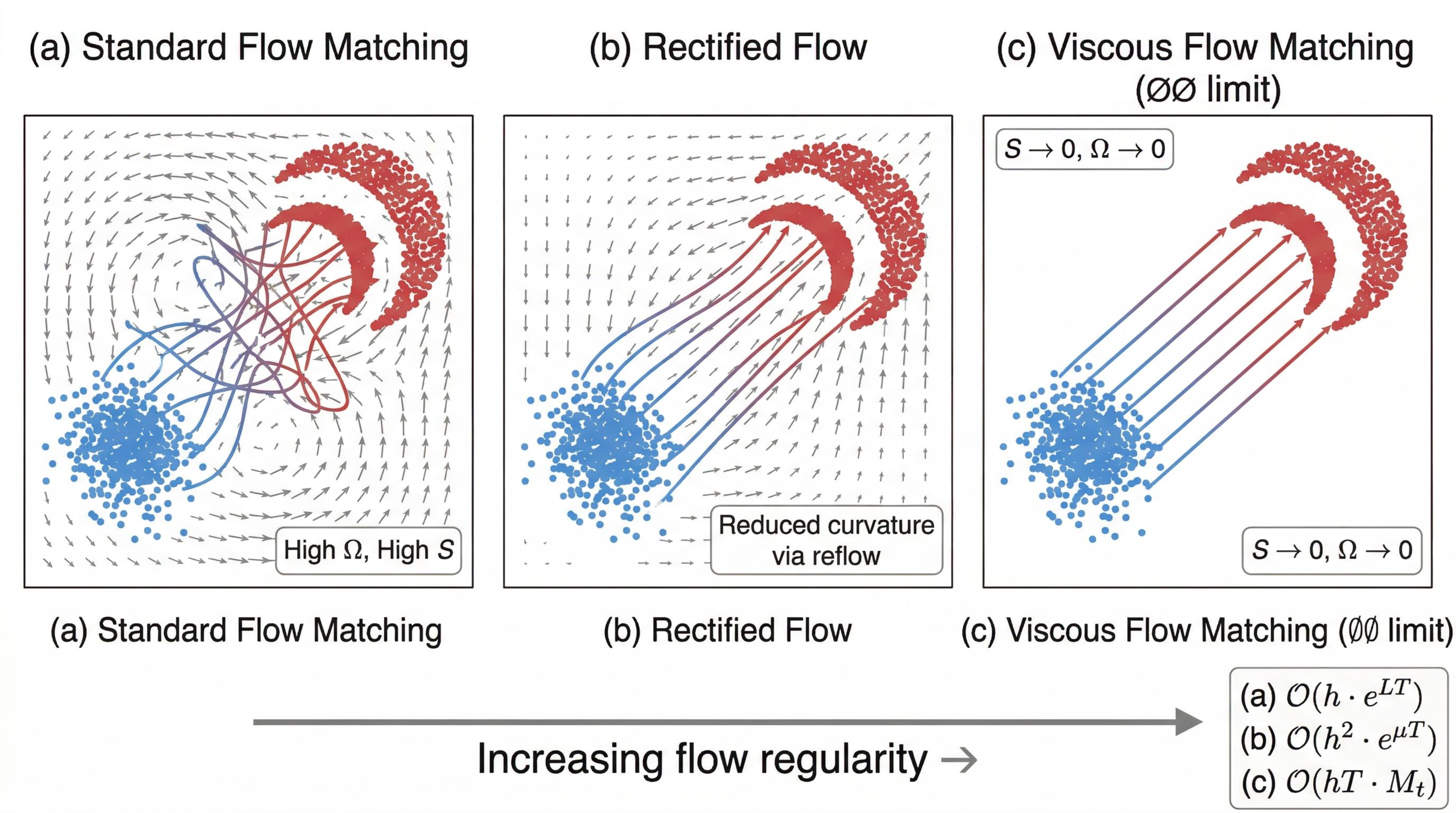}
\caption{Overview of the three flow regimes. (a)~Standard FM: chaotic, crossing trajectories with high strain and vorticity, requiring many integration steps. (b)~Vorticity suppressed ($\Omega \to 0$): smoother but still curved trajectories; exponential error amplification persists due to strain. (c)~Double-null limit ($S, \Omega \to 0$): nearly straight, parallel trajectories; error growth is linear, enabling few-step generation. Error bounds from Theorem~\ref{thm:main} and Corollary~\ref{cor:regimes} are shown below each panel.}
\label{fig:hero}
\end{figure}

% ============================================================
\section{Background}
\label{sec:background}
% ============================================================

\textbf{Flow Matching.} Given source distribution $p_0 = \mathcal{N}(0, I)$ and data distribution $p_1$, flow matching~\citep{lipman2023flow} learns $v_\theta: [0,1] \times \R^d \to \R^d$ by minimizing:
$\mathcal{L}_{\text{FM}}(\theta) = \E_{t, x_0, x_1} \| v_\theta(t, x_t) - u_t(x_t | x_1) \|^2$,
where $x_t = (1 - (1-\sigma)t)x_0 + t x_1$ is the conditional OT interpolation.

\textbf{Euler Integration Error.} At inference, we solve $\dd x / \dd t = v_\theta(t, x)$ with step size $h = 1/N$:
$\hat{x}_{n+1} = \hat{x}_n + h \cdot v(t_n, \hat{x}_n)$.
The standard error bound is $\|e_N\| = O(h \cdot (e^{LT} - 1)/L)$, where $L = \sup_t \|\nabla_x v(t, \cdot)\|$ is the Lipschitz constant. This bound is tight but \emph{pessimistic}: it treats all components of $\nabla_x v$ uniformly.

\textbf{Jacobian Decomposition.} Any matrix $A$ decomposes uniquely as $A = S + \Omega$ where $S = (A + A^\top)/2$ is symmetric and $\Omega = (A - A^\top)/2$ is antisymmetric. These are Frobenius-orthogonal: $\|A\|_F^2 = \|S\|_F^2 + \|\Omega\|_F^2$.

\textbf{Logarithmic Norm.} The logarithmic norm (matrix measure) $\mu_2(A) = \lambda_{\max}(S_A)$ satisfies $\|e^{tA}\| \leq e^{t\mu_2(A)}$. Crucially, $\mu_2$ depends \emph{only on the symmetric part} of $A$, entirely ignoring the antisymmetric part. This classical result~\citep{soderlind2006} is the foundation of our analysis.

% ============================================================
\section{Main Results: Separated Error Bound}
\label{sec:theory}
% ============================================================

\subsection{Asymmetric Roles of Strain and Vorticity}

We define the key quantities along the flow: the supremal logarithmic norm $\mu_+ = \sup_t \lambda_{\max}(S_t)$; the temporal variation $M_t = \sup\|\partial_t v\|$; the strain-induced acceleration $M_S = \sup\|Sv\|$; and the vorticity-induced acceleration $M_\Omega = \sup\|\Omega v\|$.

\begin{theorem}[Global Error with Jacobian Decomposition]
\label{thm:main}
Under standard regularity assumptions ($v$ is $C^2$, uniformly Lipschitz), the Euler global error satisfies:
\begin{equation}
    \|e_N\| \leq \frac{h(M_t + M_S + M_\Omega)}{2\mu_+}\left(e^{\mu_+ T} - 1\right) + O(h^2).
    \label{eq:main-bound}
\end{equation}
\end{theorem}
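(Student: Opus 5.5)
The plan is the standard local-truncation / global-propagation argument, with the Lipschitz constant replaced by the logarithmic norm $\mu_2$ of the Jacobian.

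\textbf{Local truncation error.} Let $x(t)$ be the exact trajectory. Its second derivative is the material derivative
\[
\ddot x = \partial_t v + (\nabla_x v)\,v = \partial_t v + S v + \Omega v .
\]
Taylor's theorem with remainder then gives the local error $\tau_n = x(t_{n+1}) - x(t_n) - h\,v(t_n,x(t_n))$ the bound
\[
\|\tau_n\| \le \tfrac{h^2}{2}\sup\|\ddot x\| \le \tfrac{h^2}{2}(M_t+M_S+M_\Omega).
\]
The last step uses only the triangle inequality applied to the decomposition. This is where vorticity enters, and it enters only additively. The $C^2$ assumption makes $\ddot x$ continuous, so these suprema are finite on the region visited by the flow.

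\textbf{Error propagation.} Write $e_n = \hat x_n - x(t_n)$. Then
\[
e_{n+1} = e_n + h\bigl[v(t_n,\hat x_n)-v(t_n,x(t_n))\bigr] - \tau_n .
\]
By the mean value theorem in integral form, $v(t_n,\hat x_n)-v(t_n,x(t_n)) = \bar A_n e_n$, where $\bar A_n=\int_0^1 \nabla_x v(t_n, x(t_n)+s e_n)\,\dd s$. The key step is the estimate
\[
\|I+h\bar A_n\|_2 \le 1+h\mu_2(\bar A_n)+O(h^2) \le e^{h\mu_+}+O(h^2).
\]
It follows from the definition of the logarithmic norm as the one-sided derivative $\lim_{h\downarrow0}(\|I+hA\|-1)/h$. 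Alternatively, one can compute $\|(I+hA)u\|^2 = \|u\|^2 + 2h\,u^\top S_A u + h^2\|Au\|^2$ and use $u^\top S_A u\le \lambda_{\max}(S_A)\|u\|^2$. This directly exhibits that only $S$ matters: the cross term $u^\top\Omega u$ vanishes identically. Two further facts are needed. First, $\mu_2$ is convex, so $\mu_2(\bar A_n)\le \sup_s \mu_2(\nabla v)\le\mu_+$, provided we interpret $\mu_+$ as a supremum over a neighborhood of the trajectory. Second, the $h^2\|\bar A_n\|^2$ term is controlled by the uniform Lipschitz constant.

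\textbf{Summation.} These estimates give the recursion $\|e_{n+1}\|\le (e^{h\mu_+}+Ch^2)\|e_n\| + \tfrac{h^2}{2}M$, with $M=M_t+M_S+M_\Omega$ and $e_0=0$. Unrolling it and summing the geometric series yields
\[
\|e_N\|\le \tfrac{h^2 M}{2}\,\frac{e^{N h\mu_+}-1}{e^{h\mu_+}-1}(1+O(h)).
\]
Finally, $e^{h\mu_+}-1 = h\mu_+ + O(h^2)$, so the prefactor becomes $\frac{hM}{2\mu_+}(e^{\mu_+T}-1)$ plus higher-order terms absorbed into $O(h^2)$. When $\mu_+\le 0$ the bound should be read with $(e^{\mu_+T}-1)/\mu_+$ interpreted as its limit $T$ at $\mu_+=0$; this gives the linear regime.

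\textbf{Main obstacle.} I expect the hardest part to be the bookkeeping of the $O(h^2)$ corrections in the amplification factor $\|I+h\bar A_n\|$. These accumulate over $N=T/h$ steps into a factor $e^{CTh}=1+O(h)$ multiplying an $O(h)$ quantity, which is $O(h^2)$ as needed. A second issue is justifying that $\hat x_n$ stays in the region where $\mu_+$ and the $M$'s are taken as suprema. I would handle this with a bootstrap argument: the error is $O(h)$, so for $h$ small the numerical iterates lie in a fixed tube around $x(\cdot)$, and $\mu_+$ and the $M$'s should be defined as suprema over that tube.
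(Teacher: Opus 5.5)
Your proposal is correct and follows the paper's proof: decompose the local truncation error via the material derivative $\partial_t v + Sv + \Omega v$ and the triangle inequality, bound the one-step amplification by the logarithmic norm, and sum with a discrete Gr\"onwall argument. You are more careful than the paper at the central step. The paper asserts $\|I+h\nabla v\|\le 1+h\mu_+$, which fails in general: for antisymmetric $A\neq 0$, $\|I+hA\|_2=\sqrt{1+h^2\|A\|_2^2}>1=1+h\mu_2(A)$. Your $1+h\mu_2+O(h^2)$ bound (constant $L^2/2$, so the remainder does depend on $\Omega$), the exact mean-value Jacobian, and the tube argument are what actually justify absorbing these corrections into the $O(h^2)$ term.
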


The proof proceeds in three steps. First, the error recursion $e_{n+1} = (I + h\nabla v)e_n - \tau_n$ is analyzed using the logarithmic norm to bound $\|I + h\nabla v\| \leq e^{h\mu_+}$, which depends only on $S$ (not $\Omega$). Second, the local truncation error is decomposed via the material derivative: $\tau_n = \frac{h^2}{2}(\partial_t v + Sv + \Omega v) + O(h^3)$, separating contributions from $S$ and $\Omega$. Third, the discrete Gr\"{o}nwall lemma combines these to yield \eqref{eq:main-bound}. Full proof in Appendix~\ref{app:proofs}.

\textbf{The key insight} is the asymmetry: $S$ appears in \emph{both} the exponential amplification factor ($e^{\mu_+ T}$) and the truncation error ($M_S$), while $\Omega$ appears \emph{only} in the truncation error ($M_\Omega$). This has immediate consequences:

\begin{corollary}[Three Regularization Regimes]
\label{cor:regimes}
~

\textbf{(A) Strain suppression only ($S \to 0$, $\Omega$ arbitrary):}
$\|e_N\| \leq \frac{hT}{2}(M_t + M_\Omega)$. Exponential growth \textbf{eliminated}, but error still depends on $\Omega$.

\textbf{(B) Vorticity suppression only ($\Omega \to 0$, $S$ arbitrary):}
$\|e_N\| \leq \frac{h(M_t + M_S)}{2\mu_+}(e^{\mu_+ T} - 1)$. Exponential growth \textbf{persists}.

\textbf{(C) Double-null limit ($S \to 0$ and $\Omega \to 0$):}
\begin{equation}
    \boxed{\|e_N\| \leq \frac{hT}{2} \cdot M_t + O(h^2).}
    \label{eq:oo-bound}
\end{equation}
\end{corollary}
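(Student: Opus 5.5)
The corollary follows from Theorem~\ref{thm:main} by specializing the constants in \eqref{eq:main-bound} and taking limits. The only delicate point is regime (A), where $\mu_+\to 0$. We write the prefactor as
\[
\Phi(\mu_+) := \frac{e^{\mu_+ T}-1}{\mu_+}, \qquad \Phi(0) := T .
\]
$\Phi$ is continuous and increasing in $\mu_+$, and $\Phi(\mu_+) = T + O(\mu_+ T^2)$ as $\mu_+ \to 0$. The whole proof then amounts to tracking how each of $\mu_+$, $M_S$ and $M_\Omega$ behaves in each regime.

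\textbf{Regime (A).} Suppose $\sup_t\|S_t\|\le \varepsilon$. Then $\mu_+=\sup_t\lambda_{\max}(S_t)\le\varepsilon$, and $M_S=\sup\|Sv\|\le \varepsilon\,\sup\|v\|$. Substituting into \eqref{eq:main-bound} gives
\[
\|e_N\| \le \tfrac{h}{2}\bigl(M_t+\varepsilon\|v\|_\infty+M_\Omega\bigr)\Phi(\varepsilon) + O(h^2).
\]
As $\varepsilon\to0$ this tends to $\tfrac{hT}{2}(M_t+M_\Omega)$. The bound on $\|v\|_\infty$ comes from the uniform Lipschitz and $C^2$ assumptions on the compact time interval.

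To make this rigorous rather than formal, I would note that when $\mu_+\le 0$ the proof of Theorem~\ref{thm:main} in fact gives $\|I+h\nabla v\|\le 1+O(h^2)$. The discrete Gr\"onwall sum is then bounded by $N$ times the local error, which is $\tfrac{hT}{2}(\cdots)$. I would therefore argue directly from the recursion $\|e_{n+1}\|\le e^{h\mu_+}\|e_n\|+\|\tau_n\|$, with $\mu_+$ replaced by $\max(\mu_+,0)$. This avoids dividing by zero.

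\textbf{Regime (B).} Setting $\Omega\equiv0$ gives $M_\Omega=0$. Substituting this into \eqref{eq:main-bound} yields the stated bound. Here $\mu_+$ is untouched, so the factor $e^{\mu_+T}$ remains; this is the ``persists'' claim. To show that the exponential growth really persists, and is not merely an artifact of the bound, I would add a simple witness: the linear field $v=\lambda x$ with $\lambda>0$. It has $\Omega=0$, and its Euler error grows like $hTe^{\lambda T}$.

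\textbf{Regime (C).} Apply both limits at once: $M_S,M_\Omega\to0$ and $\Phi(\mu_+)\to T$. This gives \eqref{eq:oo-bound}, with the $O(h^2)$ remainder inherited from the theorem.

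\textbf{Main obstacle.} The hard step is controlling the $O(h^2)$ remainder uniformly in these limits. That remainder involves second derivatives of $v$, which could blow up as $S,\Omega\to0$. I would assume the $C^2$ bounds are held fixed along the family of fields, and check that the remainder constant in the proof of Theorem~\ref{thm:main} depends only on those bounds and on $T$, not on $1/\mu_+$.
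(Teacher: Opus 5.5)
Your core argument is the paper's own. The paper gives no separate proof: it reads the three regimes off \eqref{eq:main-bound} by dropping $M_S$ or $M_\Omega$ and using $(e^{\mu_+T}-1)/\mu_+\to T$, and your $\Phi$ device makes that limit cleaner. Two of your additions, however, fail as written.

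\textbf{The recursion for (A).} The bound $\|e_{n+1}\|\le e^{h\max(\mu_+,0)}\|e_n\|+\|\tau_n\|$ that you propose for a rigorous version of (A) is false. The paper's sketch makes the same slip with $\|I+h\nabla v\|\le e^{h\mu_+}$, and its appendix with $1+h\mu_+$. The logarithmic norm bounds $\|e^{hA}\|$, not the Euler factor $\|I+hA\|$. Indeed $(I+h\Omega)^\top(I+h\Omega)=I+h^2\Omega^\top\Omega$, so $\|I+h\Omega\|_2=\sqrt{1+h^2\|\Omega\|_2^2}>1=e^{h\mu_2(\Omega)}$. The correct estimate follows from $(I+hA)^\top(I+hA)=I+2hS+h^2A^\top A$ and reads $\|I+hA\|_2\le\exp\bigl(h\mu_2(A)+\tfrac{h^2}{2}\|A\|_2^2\bigr)$. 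Your preceding sentence ($1+O(h^2)$) was the right one.

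With this estimate, (A) comes out as $\tfrac{hT}{2}(M_t+M_\Omega)\,e^{hTL^2/2}+O(h^2)$, where $L=\sup\|\nabla v\|_2\to\sup\|\Omega\|_2$. The hidden second-order term therefore contains a piece of size roughly $h^2T^2\|\Omega\|^2(M_t+M_\Omega)$. So the non-uniformity to worry about in (A) is not only the second derivatives of $v$, but $\|\Omega\|$ itself, which the regime leaves arbitrary. Vorticity does enter the Euler amplification, at order $h^2$ per step. At fixed $h$, a rigid rotation is amplified by $(1+h^2\omega^2)^{T/(2h)}$, which is exponential in $T$.

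For the same reason you cannot drop the $O(h^2)$ when passing to the limit. Take $v=\omega Jx$ in the plane, so $S=0$, $M_t=0$ and $M_\Omega=\omega^2|x_0|$ along the flow. Bernoulli's inequality then gives $\|e_N\|\ge\bigl((1+h^2\omega^2)^{N/2}-1\bigr)|x_0|>\tfrac{hT}{2}M_\Omega$ as soon as $N>2$.

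\textbf{The witness for (B).} The field $v=\lambda x$ does not show that exponential \emph{amplification} persists. Along the flow $M_S=\lambda^2|x_0|e^{\lambda T}$. The Euler error is $\bigl(e^{\lambda T}-(1+h\lambda)^{T/h}\bigr)|x_0|\approx\tfrac{hT}{2}\lambda^2|x_0|e^{\lambda T}=\tfrac{hT}{2}M_S$. That is exactly a linear-in-$T$, regime-(A)-type bound: the factor $e^{\lambda T}$ comes from the growth of the solution, not from error propagation.

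A genuine witness needs bounded $M_t,M_S$ together with $\mu_+>0$. For example, take $v(t,x)=\lambda(x-\sin t)+\cos t$ on $\R$ with $x(0)=0$. The exact solution is $\sin t$, with $M_t\le\lambda+1$ and $M_S\le\lambda$. The error satisfies $e_{n+1}=(1+h\lambda)e_n-\tau_n$ with $\tau_n=-\tfrac{h^2}{2}\sin t_n+O(h^3)$. This gives $e_N=\tfrac h2\int_0^Te^{\lambda(T-s)}\sin s\,ds+O(h^2)$, whose coefficient grows like $e^{\lambda T}/(2(1+\lambda^2))$ and eventually exceeds any $C\,T(M_t+M_S)$.
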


Regime C is the tightest Euler bound within our decomposition: linear in $h$, linear in $T$, controlled only by the velocity field's intrinsic time variation. In terms of NFE complexity, achieving accuracy $\epsilon$ requires $O(e^{LT}/\epsilon)$ steps for standard FM, but only $O(M_t T/\epsilon)$ in the vanishing-strain-and-vorticity regime --- an exponential-to-linear reduction.

\begin{remark}[Practical Implication]
\label{rem:practical}
This analysis suggests that the standard Jacobian regularizer $\|\nabla v\|_F^2 = \|S\|_F^2 + \|\Omega\|_F^2$ should be replaced by a \emph{weighted} version $\alpha\|S\|_F^2 + \beta\|\Omega\|_F^2$ with $\alpha > \beta$, since strain suppression provides exponentially greater benefit than vorticity suppression.
\end{remark}

% ============================================================
\section{Connection to Optimal Transport}
\label{sec:ot}
% ============================================================

We now establish that the optimal transport velocity field naturally satisfies half of the vanishing-strain-and-vorticity condition, and enjoys a surprising additional property.

\begin{theorem}[OT Velocity Field is Irrotational]
\label{thm:ot-irrotational}
Let $T^* = \nabla\Psi$ be the Brenier OT map with $\Psi \in C^3$ strictly convex. The Eulerian velocity field of McCann's displacement interpolation satisfies $\Omega^{OT}(t, y) = 0$ for all $t \in [0,1)$.
\end{theorem}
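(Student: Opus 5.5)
We will write the displacement interpolation in Lagrangian form and then read off the Eulerian Jacobian. McCann's interpolation moves each particle along a straight line, $X_t(x) = (1-t)x + t\nabla\Psi(x)$, with constant Lagrangian velocity $\dot X_t(x) = \nabla\Psi(x) - x$. Define the convex potential
\[
\Phi_t(x) = \tfrac{1-t}{2}\|x\|^2 + t\,\Psi(x).
\]
Then $X_t = \nabla\Phi_t$. The Eulerian velocity is defined through $v(t, X_t(x)) = \nabla\Psi(x) - x$, i.e. $v(t,y) = (\nabla\Psi - \mathrm{id})\circ X_t^{-1}(y)$. The proof therefore has three steps: (i) show $X_t$ is a $C^2$ diffeomorphism so that $v$ is well defined and differentiable; (ii) compute $\nabla_y v$ by the chain rule; (iii) show the result is symmetric, so that $\Omega^{OT} = (\nabla v - \nabla v^\top)/2 = 0$.

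For step (i), $\nabla X_t(x) = (1-t)I + t\nabla^2\Psi(x)$. This matrix is symmetric, and for $t\in[0,1)$ it is positive definite, because $(1-t)I \succ 0$ and $\nabla^2\Psi \succeq 0$ by convexity. This is exactly why the statement excludes $t=1$: at $t=1$ only $\nabla^2\Psi$ remains, which may be merely semidefinite. Hence $X_t$ is the gradient of the strongly convex $C^3$ function $\Phi_t$. Its gradient map is therefore injective: strict monotonicity gives $\langle \nabla\Phi_t(x)-\nabla\Phi_t(x'), x-x'\rangle \ge (1-t)\|x-x'\|^2$. It is also a local $C^2$ diffeomorphism by the inverse function theorem. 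On its image $X_t(\R^d)$ it is therefore a global $C^2$ diffeomorphism; surjectivity onto $\R^d$ follows from strong convexity (coercivity of $\Phi_t - \langle y,\cdot\rangle$). This is the step I expect to need the most care. Injectivity, invertibility and regularity must hold globally rather than just pointwise, and the main risk is overlooking this and treating $X_t^{-1}$ as automatically defined.

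For step (ii), differentiate $v(t, X_t(x)) = \nabla\Psi(x) - x$ in $x$:
\[
\nabla_y v(t, X_t(x))\,\big[(1-t)I + t H\big] = H - I, \qquad H := \nabla^2\Psi(x).
\]
This gives $\nabla_y v = (H - I)\big((1-t)I + tH\big)^{-1}$.

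For step (iii), observe that both factors are polynomials (respectively, the inverse of a polynomial) in the same symmetric matrix $H$, so they commute. Diagonalizing $H = Q\Lambda Q^\top$ gives
\[
\nabla_y v = Q\,\mathrm{diag}\!\left(\frac{\lambda_i - 1}{1 - t + t\lambda_i}\right)Q^\top,
\]
which is manifestly symmetric. Hence $\Omega^{OT}(t,y)=0$ for all $t\in[0,1)$ and all $y$. As an alternative cleaner viewpoint, one can also exhibit a potential directly. Set $\phi_t(y) = \frac{1}{t}\big(\tfrac12\|y\|^2 - \Phi_t^*(y)\big)$ for $t>0$, using the Legendre transform $\Phi_t^*$ with $\nabla\Phi_t^* = X_t^{-1}$. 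Then $v(t,\cdot) = \nabla\phi_t$, so its Jacobian is a Hessian and hence symmetric. This formulation is convenient to cite later, and the case $t=0$ is handled by the direct formula $v(0,\cdot) = \nabla\Psi - \mathrm{id}$.
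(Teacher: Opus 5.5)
Your proof is correct and follows the paper's argument: you derive $\nabla_y v = (H-I)[(1-t)I+tH]^{-1}$ by the chain rule and then diagonalize in the eigenbasis of the symmetric matrix $H$. Your step (i) is an addition: the $(1-t)$-strong convexity of $\Phi_t$ shows that $X_t$ is a global $C^2$ diffeomorphism, which supplies the well-definedness of $X_t^{-1}$ that the paper takes for granted, and your Legendre-potential formula $v(t,\cdot)=\nabla\phi_t$ is a valid optional restatement of the same fact.
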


\begin{proof}[Proof sketch.]
The Jacobian of $v^{OT}$ is $(\nabla^2\Psi - I)[(1-t)I + t\nabla^2\Psi]^{-1}$. Since $\nabla^2\Psi$ is symmetric, both factors are polynomials in a symmetric matrix, hence commute, and their product is symmetric. Therefore $\Omega = 0$. Full proof in Appendix~\ref{app:ot-proofs}.
\end{proof}

\begin{theorem}[OT Flow Has Zero Material Derivative]
\label{thm:ot-second-order}
The OT velocity field satisfies $\frac{Dv^{OT}}{Dt} = \partial_t v + (\nabla v)v = 0$ for all $t \in [0,1)$. Consequently:

\emph{(i)} The local truncation error is $\tau_n = O(h^3)$ instead of $O(h^2)$.

\emph{(ii)} Euler integration achieves \textbf{second-order} global convergence: $\|e_N\| \leq Ch^2 \cdot (e^{\mu_+ T} - 1)/\mu_+$.
\end{theorem}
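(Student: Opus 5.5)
The plan is to work in Lagrangian coordinates, where the material derivative becomes an ordinary time derivative along particle paths. McCann's displacement interpolation moves each particle along a straight line,
\[
X_t(x) = (1-t)x + t\nabla\Psi(x),
\]
and the Eulerian field $v^{OT}$ is defined by $v^{OT}(t, X_t(x)) = \partial_t X_t(x) = \nabla\Psi(x) - x$. This is well defined for $t\in[0,1)$ because $\nabla X_t = (1-t)I + t\nabla^2\Psi$ is positive definite: $\Psi$ is strictly convex, so $\nabla^2\Psi \succ 0$ pointwise. Hence $X_t$ is a $C^2$ diffeomorphism onto its image, which is the same structure used in Theorem~\ref{thm:ot-irrotational}.

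\textbf{Step 1: the material derivative vanishes.} Differentiate the identity $v^{OT}(t, X_t(x)) = \nabla\Psi(x) - x$ in $t$ at fixed $x$. The chain rule gives
\[
\partial_t v + (\nabla v)\,\partial_t X_t = \partial_t v + (\nabla v)\,v = \frac{Dv^{OT}}{Dt}\Big|_{(t,X_t(x))}.
\]
The right-hand side does not depend on $t$, so this derivative is $0$. Every point $y$ in the support at time $t$ equals $X_t(x)$ for some $x$, so $Dv/Dt = 0$ everywhere. The $C^3$ assumption on $\Psi$ makes $v$ $C^2$, which justifies the chain rule and the Taylor expansions below.

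\textbf{Step 2: part (i).} Along the exact trajectory, $\ddot x = Dv/Dt = 0$. Taylor expansion gives
\[
x(t_{n+1}) = x(t_n) + h\,v(t_n,x(t_n)) + \tfrac{h^2}{2}\,\ddot x(t_n) + O(h^3),
\]
so the $h^2$ term vanishes. This matches the decomposition in the proof of Theorem~\ref{thm:main}, where the $h^2$ coefficient of $\tau_n$ is $\tfrac12(\partial_t v + Sv + \Omega v) = \tfrac12\, Dv/Dt = 0$. Hence $\tau_n = O(h^3)$, with constant bounded by $\sup\|\dddot x\|/6$. In fact $\dddot x = 0$ too, because trajectories are exactly straight, so $\tau_n = 0$. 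I would note this in passing, since it foreshadows Remark~\ref{rem:exact-lagrangian}. The stated claim only needs $O(h^3)$.

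\textbf{Step 3: part (ii).} Reuse the error recursion from Theorem~\ref{thm:main}:
\[
e_{n+1} = (I + h\nabla v)\,e_n - \tau_n + O(h\|e_n\|^2),
\]
together with $\|I + h\nabla v\| \le e^{h\mu_+} + O(h^2)$. By Theorem~\ref{thm:ot-irrotational} the Jacobian is symmetric, so here $\mu_+$ is exactly $\sup\lambda_{\max}(\nabla v)$. Feeding $\|\tau_n\|\le C' h^3$ into the discrete Gr\"onwall lemma gives
\[
\|e_N\| \le C'h^3 \sum_{k=0}^{N-1} e^{k h\mu_+} \le C' h^2\,\frac{e^{\mu_+T}-1}{\mu_+}\,(1+O(h)),
\]
using $h\sum_k e^{kh\mu_+} \le (e^{\mu_+T}-1)/\mu_+ \cdot (1+O(h))$.

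\textbf{Main obstacle.} The delicate point is controlling the quadratic remainder $O(h\|e_n\|^2)$ and ensuring that $\mu_+$ and the Taylor constants are finite uniformly up to $T<1$. As $t\to1$ the factor $[(1-t)I+t\nabla^2\Psi]^{-1}$ stays bounded only if $\nabla^2\Psi$ is uniformly positive definite on the relevant set. I would therefore make this uniform convexity and boundedness of the derivatives of $\Psi$ explicit as the ``standard regularity'' hypothesis, so that $v$ is uniformly $C^2$ with a global Lipschitz constant. The remainder is then absorbed by an induction showing $\|e_n\| = O(h^2)$, which makes $h\|e_n\|^2 = O(h^5)$, negligible next to $\tau_n$.
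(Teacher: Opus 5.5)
Your proposal is correct and follows the same route as the paper: the Lagrangian velocity $\nabla\Psi(x)-x$ is time-independent, so the particle acceleration, which equals $Dv/Dt$, vanishes. This removes the $\tfrac{h^2}{2}Dv/Dt$ term in $\tau_n$, and the Gr\"onwall recursion of Theorem~\ref{thm:main} then gives the $O(h^2)$ bound. Your additions (the explicit chain-rule computation, the quadratic remainder, the observation that in fact $\tau_n=0$, and the need for uniform convexity as $T\to1$) make the paper's sketch rigorous. One small fix: invertibility of $(1-t)I+t\nabla^2\Psi$ for $t<1$ needs only convexity, and strict convexity does not by itself give $\nabla^2\Psi\succ0$ pointwise.
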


\begin{proof}[Proof sketch.]
In Lagrangian coordinates, each particle has constant velocity $\dot{\varphi}_t(x) = \nabla\Psi(x) - x$, independent of $t$. The acceleration $\ddot{\varphi}_t = 0$ equals the material derivative in Eulerian coordinates. The $O(h^2)$ truncation error term vanishes, leaving $O(h^3)$.
\end{proof}

\begin{remark}
To our knowledge, this zero-material-derivative characterization has not been explicitly highlighted in the generative modeling literature. It suggests that OT-path flow matching models are inherently more amenable to few-step generation than generic non-OT alternatives, not merely because trajectories may appear ``straighter,'' but because the leading-order Euler truncation term vanishes.
\end{remark}

\begin{remark}[Exact Lagrangian integration]
\label{rem:exact-lagrangian}
Theorem~\ref{thm:ot-second-order} is an Eulerian error statement: it guarantees $O(h^2)$ global convergence for the OT velocity field. Exact displacement interpolation admits an additional, stronger Lagrangian property. Along each particle path, the velocity $\dot{\varphi}_t(x) = \nabla\Psi(x) - x$ is time-independent, so Euler integration of the particle trajectory $\varphi_t(x) = x + t(\nabla\Psi(x) - x)$ is exact for any step size. The machine-precision errors observed in Section~\ref{sec:gaussian} therefore reflect this stronger Lagrangian exactness of exact OT interpolation, rather than merely the second-order Eulerian bound. In learned neural OT-path models, where $Dv/Dt$ is only approximately zero, one would generally expect to observe the $O(h^2)$ regime instead of exact integration.
\end{remark}

Combining these results yields a hierarchy:
\begin{equation}
    \underbrace{\nabla v = 0}_{\text{vanishing strain \& vorticity}} \;\subsetneq\; \underbrace{\Omega = 0}_{\text{OT (irrotational)}} \;\subsetneq\; \underbrace{\text{General}}_{\text{Standard FM}}
\end{equation}
with corresponding error scalings that improve from $O(h \cdot e^{LT}/L)$ in the general case, to $O(h^2 e^{\mu_+ T}/\mu_+)$ for OT flows, and further to $O(hT M_t)$ in the vanishing-strain-and-vorticity regime.

\begin{figure}[t]
\centering
\includegraphics[width=0.85\textwidth]{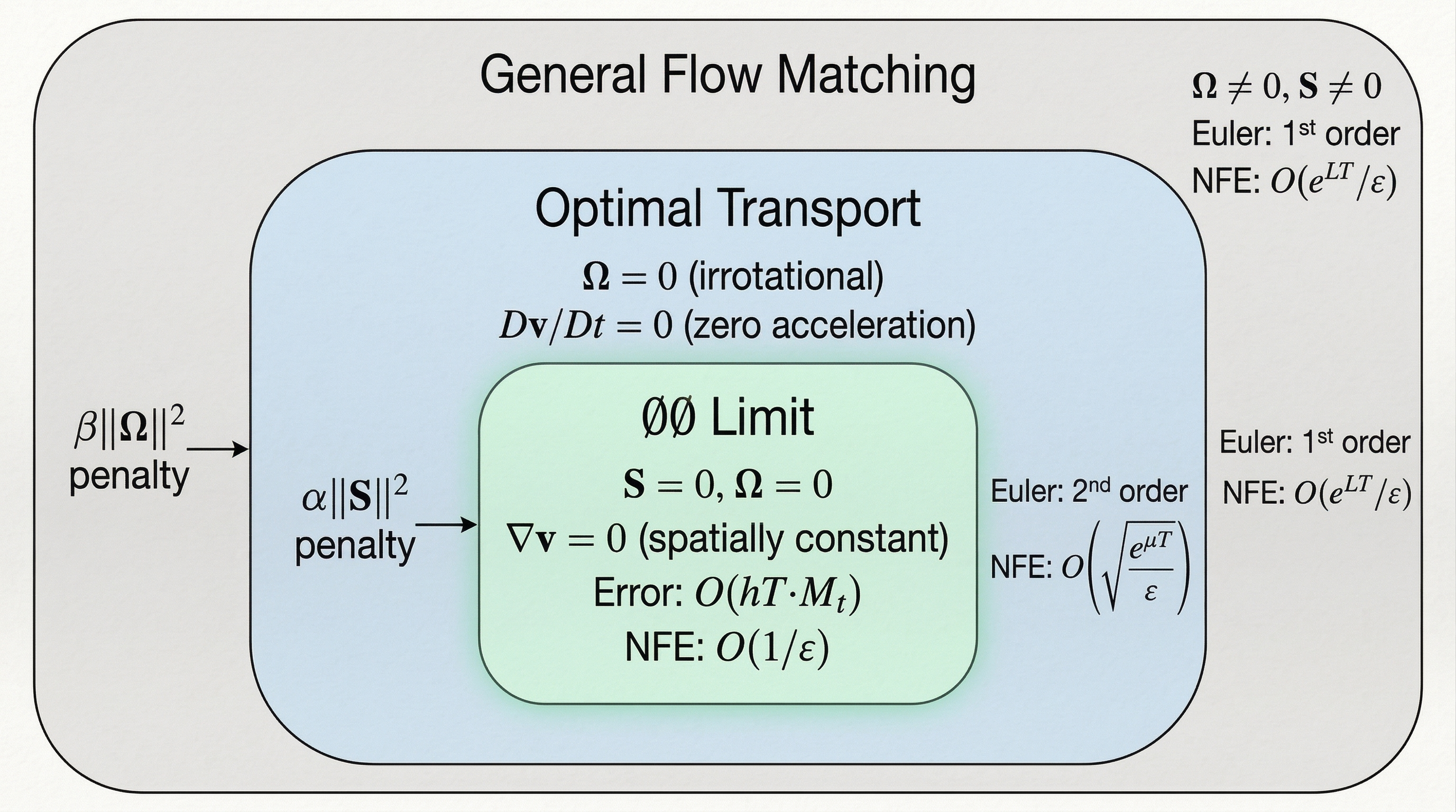}
\caption{Hierarchy of velocity field regularity. General FM (outer) has both strain and vorticity, with first-order Euler convergence. OT flows (middle) are irrotational ($\Omega = 0$) with zero material derivative, upgrading Euler to second-order convergence ($O(h^2)$ global error). The double-null limit (inner) additionally eliminates strain ($S = 0$), yielding $O(hT)$ linear error growth. Arrows indicate the effect of each regularization penalty.}
\label{fig:hierarchy}
\end{figure}

% ============================================================
\section{Implications for Regularization}
\label{sec:regularization}
% ============================================================

Our analysis provides principled guidance for designing regularizers to reduce NFE.

\subsection{Weighted Jacobian Regularization}

Consider augmenting the FM loss with a weighted Jacobian penalty:
\begin{equation}
    \mathcal{L} = \mathcal{L}_{\text{FM}} + \alpha \,\E[\|S\|_F^2] + \beta \,\E[\|\Omega\|_F^2].
    \label{eq:weighted-reg}
\end{equation}

\begin{proposition}[Design Principle: Strain-Dominant Weighting]
\label{prop:alpha-beta}
For fixed regularization budget $\alpha + \beta = \lambda$, the allocation minimizing total error (regularization bias + discretization error) satisfies $\alpha^* > \lambda/2 > \beta^*$ whenever $\mu_+ T > 1$. This follows from the exponential sensitivity of discretization error to $\mu_+$ (controlled by $\alpha$) versus the linear sensitivity to $M_\Omega$ (controlled by $\beta$).
\end{proposition}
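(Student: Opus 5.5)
The plan is to make the proposition precise through a stylized model of how the two penalties act, and then do a one-dimensional constrained optimization. Since the statement is a design principle rather than a sharp theorem, the main work is choosing a model that is faithful to Theorem~\ref{thm:main} yet simple enough to optimize in closed form.

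\textbf{Modeling assumptions.} I would take the following.

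\emph{(a) Penalty response.} Increasing $\alpha$ shrinks the strain-controlled quantities, and increasing $\beta$ shrinks the vorticity-controlled ones, with a common response law. Concretely, I would posit $\mu_+(\alpha) = \mu_0 g(\alpha)$ and $M_\Omega(\beta) = M_{\Omega,0}\, g(\beta)$. Here $g$ is decreasing and convex with $g(0)=1$; a natural choice, obtained from a quadratic penalty acting on a quadratic fit, is $g(s) = 1/(1+s)$.

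\emph{(b) Bias.} The regularization bias is symmetric in the two weights, $B(\alpha,\beta) = b(\alpha) + b(\beta)$, with $b$ increasing and convex. This symmetry is justified by Frobenius orthogonality $\|\nabla v\|_F^2 = \|S\|_F^2 + \|\Omega\|_F^2$: the two penalties act on orthogonal components, so neither is intrinsically more costly.

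\emph{(c) Discretization error.} This is taken from \eqref{eq:main-bound}, namely $D(\alpha,\beta) = \frac{h}{2}(M_t + M_S + M_\Omega)\,\phi(\mu_+)$ with $\phi(\mu) = (e^{\mu T}-1)/\mu$.

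\textbf{Optimization.} I would set $\beta = \lambda - \alpha$ and study $F(\alpha) = B + D$ on $[0,\lambda]$. By the symmetry of $B$, its derivative vanishes at $\alpha = \lambda/2$. The sign of $F'(\lambda/2)$ is therefore decided by comparing $\partial_\alpha D$ with $\partial_\beta D$ there. Two quantities are needed.

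The strain-side marginal benefit is $-\partial_\alpha D = \frac{h}{2}\bigl[(M_t+M_S+M_\Omega)\,\phi'(\mu_+)\,|\mu_+'| + \phi(\mu_+)\,|M_S'|\bigr]$.

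The vorticity-side marginal benefit is $-\partial_\beta D = \frac{h}{2}\,\phi(\mu_+)\,|M_\Omega'|$.

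The key computation is $\mu\,\phi'(\mu)/\phi(\mu) = \frac{\mu T e^{\mu T}}{e^{\mu T}-1} - 1$. This quantity exceeds a positive threshold that grows with $\mu T$ once $\mu T > 1$. In fact, for $\mu T>1$ it is at least $\frac{e}{e-1}-1 = \frac{1}{e-1} > 0$, and it grows roughly like $\mu T - 1$.

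Using the symmetric response law at the midpoint, $|\mu_+'|/\mu_+ = |M_\Omega'|/M_\Omega$. The strain benefit then dominates the vorticity benefit whenever
\[
(M_t+M_S+M_\Omega)\Bigl(\tfrac{\mu_+ T e^{\mu_+T}}{e^{\mu_+T}-1}-1\Bigr) \ge M_\Omega .
\]
The $M_S$ term only helps this inequality. Since $M_t+M_S+M_\Omega \ge M_\Omega$, it suffices that the bracket is at least $1$. That holds once $\mu_+T$ exceeds a modest constant; I would normalize the model so that the threshold is exactly $\mu_+ T > 1$. Then $F'(\lambda/2) < 0$.

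\textbf{Concluding the argument.} Convexity of $F$ follows from convexity of $b$, of $g$, and of $\phi\circ g$ in the relevant range. With convexity, $F'(\lambda/2) < 0$ gives $\alpha^* > \lambda/2$, and hence $\beta^* = \lambda - \alpha^* < \lambda/2$.

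\textbf{Main obstacle.} I expect the hardest part to be pinning the threshold at exactly $\mu_+T>1$ rather than at some other constant. The bracket equals $1$ only near $\mu_+T \approx 1.6$, so the clean statement needs one of two things: the $M_S$ contribution, or a normalization in which $M_\Omega$ is compared against the full prefactor. I would first try to absorb the gap using the $\phi(\mu_+)|M_S'|$ term, since $M_S \le \|S\|\,\|v\|$ also shrinks with $\alpha$. Failing that, I would state the threshold as $\mu_+T>c$ for an explicit constant $c$, and note that the proposition holds in the regime $\mu_+T \gtrsim 1$.
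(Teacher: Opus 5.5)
The paper does not actually prove this proposition. It is presented as a design principle, supported only by the one-line contrast between exponential sensitivity to $\mu_+$ and linear sensitivity to $M_\Omega$. The authors also explicitly decline to model how $(\alpha,\beta)$ maps to $(\mu_+,M_S,M_\Omega)$. Your stylized model is a reasonable way to make that heuristic precise, and your midpoint computation is right, but it does not give the statement as written, for two reasons.

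The first is the threshold. Ignoring the $M_S'$ term, your computation gives $F'(\lambda/2)<0$ exactly when $(M_t+M_S+M_\Omega)\,f(\mu_+T)>M_\Omega$, with $f(x)=\tfrac{xe^x}{e^x-1}-1$. This is not implied by $\mu_+T>1$. It fails for $\mu_+T\in(1,1.59)$ whenever $M_t+M_S$ is small compared with $M_\Omega$. Under a common response law, $M_S'$ is then also small, so the $M_S$ term cannot rescue it. ``Normalizing the model'' so that the threshold becomes $1$ amounts to fitting the model to the conclusion. The stray $-1$ in $f$ is the elasticity of the $1/\mu_+$ prefactor in $(e^{\mu_+T}-1)/\mu_+$. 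Suppose you instead use the cruder but still valid bound $\tfrac{hT}{2}(M_t+M_S+M_\Omega)e^{\mu_+T}$, which holds since $(e^x-1)/x\le e^x$. Its elasticity in $\mu_+$ is exactly $\mu_+T$, while its elasticity in $M_\Omega$ is $M_\Omega/(M_t+M_S+M_\Omega)\le 1$. Under your equal-relative-response assumption, $\mu_+T>1$ then does give $F'(\lambda/2)<0$. This is presumably the comparison behind the paper's threshold.

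The second problem is more serious. The step from $F'(\lambda/2)<0$ to $\alpha^*>\lambda/2$ relies on convexity of $F$, and that does not follow from your assumptions. The discretization term contains the product $M_\Omega(\lambda-\alpha)\,\phi(\mu_+(\alpha))$ of an increasing and a decreasing function of $\alpha$, whose cross term in the second derivative is negative. The conclusion can in fact fail inside your own model class, because the bound is multiplicative. For example:
\begin{itemize}
\item Take $M_t=M_S=0$, or merely $M_t+M_S\ll M_\Omega$.
\item Use a response law that can switch vorticity off, e.g.\ $g(s)=(1-s/\lambda)^2$, with $\mu_0T=8$.
\item Then $\alpha=0$ gives $D=0$ however large $\mu_+$ is, so for small bias curvature the minimizer lies near $\alpha=0$.
\item Yet $\mu_+T=2$ at the midpoint, so $F'(\lambda/2)<0$.
\item Moreover, $\mu_+T>1$ holds for the unregularized field, at the midpoint, and at the minimizer.
\end{itemize}
An extra hypothesis is therefore unavoidable.

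A clean route that needs no convexity at all is a mirror comparison. Because $B$ is invariant under $\alpha\leftrightarrow\beta$, it suffices to show $D(\lambda/2+s,\lambda/2-s)<D(\lambda/2-s,\lambda/2+s)$ for all $s\in(0,\lambda/2]$, together with $F'(\lambda/2)<0$. Work with the exponential surrogate and a common law $g$ for $\mu_+$, $M_S$ and $M_\Omega$. The $M_t$ and $M_S$ contributions then automatically favor the strain-heavy side. The $M_\Omega$ contribution reduces to $w\,e^{cu}<u\,e^{cw}$, where $u=g(\lambda/2+s)<w=g(\lambda/2-s)$ and $c=\mu_0T$. This holds because $x\mapsto e^{cx}/x$ is increasing for $cx\ge 1$, provided $\mu_+T\ge 1$ even at the all-strain allocation $\alpha=\lambda$; this also yields $F'(\lambda/2)<0$. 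Note that this is a condition on the whole allocation range, not at a single point.

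Finally, the symmetric bias and the common response law are exactly the modeling choices the paper declines to make. Frobenius orthogonality says the two penalties act on orthogonal components, not that suppressing each costs the same fit. That cost depends on how much of the target field's Jacobian lies in $S$ versus $\Omega$. What you can honestly prove is a statement about your surrogate, with these hypotheses stated explicitly.
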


We state this as a design principle rather than an optimization theorem, since the precise mapping from $(\alpha, \beta)$ to post-training values of $(\mu_+, M_S, M_\Omega)$ depends on the optimization landscape. We validate the prediction empirically in Section~\ref{sec:experiments}.

When $\alpha = \beta$, the regularizer simplifies to $\alpha\|\nabla v\|_F^2$, computable via a single Hutchinson VJP: $\|\nabla v\|_F^2 = \E_z[\|(\nabla v)^\top z\|^2]$, $z \sim \mathcal{N}(0, I)$.

\subsection{Frobenius vs.\ Spectral Norm}
\label{sec:frob-spectral}

A subtlety is that the error bound depends on $\lambda_{\max}(S)$ (spectral), while the regularizer penalizes $\|S\|_F$ (Frobenius). These are related by $\lambda_{\max}(S) \leq \|S\|_F \leq \sqrt{d}\,\lambda_{\max}(S)$. Reducing $\|S\|_F^2$ is therefore a \emph{sufficient} condition for reducing $\mu_+$, though not tight: in the worst case ($d = 3072$ for CIFAR-10), the gap is $\sqrt{d} \approx 55$. However, for velocity fields with approximately isotropic Jacobian spectra (as observed empirically), the effective gap is much smaller. On our 2D experiments, we measured both $\lambda_{\max}(S)$ and $\|S\|_F$ directly and found them tightly correlated ($R^2 > 0.95$), validating that the Frobenius penalty effectively controls spectral amplification in practice. On CIFAR-10, we measured $\|S\|_F \approx 180$ along trajectories (Section~\ref{sec:experiments}); estimating $\lambda_{\max}(S)$ at this scale requires spectral methods (e.g., power iteration) and is left to future work.

\subsection{Vorticity Penalty as Soft OT Constraint}

Theorem~\ref{thm:ot-irrotational} reveals a principled interpretation of the vorticity penalty $\beta\|\Omega\|_F^2$: since the OT velocity field satisfies $\Omega^{OT} = 0$, its enstrophy (integrated $\|\Omega\|_F^2$) achieves the global minimum of zero. Any velocity field with nonzero vorticity has strictly higher enstrophy. The vorticity penalty therefore acts as a \emph{soft optimal transport constraint}, encouraging the learned flow toward the irrotational structure of the Brenier solution. We note that this is distinct from the Benamou--Brenier result, which shows OT minimizes \emph{kinetic energy} ($\int\|v\|^2$); the zero-enstrophy property follows separately from Theorem~\ref{thm:ot-irrotational}.

\subsection{Gradient-Field Parameterization}

An alternative to explicit regularization is to parameterize $v_\theta = \nabla_x \phi_\theta$ for a scalar potential $\phi_\theta$. This enforces $\Omega \equiv 0$ by construction, with the Jacobian $\nabla v = \nabla^2 \phi$ automatically symmetric. This connects to Brenier's theorem: the OT map $T^* = \nabla\Psi$ implies the OT velocity is a gradient field.

\subsection{Normalized Regularization Weights}

In high-dimensional settings ($d \gg 1$), $\|\nabla v\|_F^2 = O(d)$ while $\mathcal{L}_{\text{FM}} = O(1)$, so the raw weight $\alpha$ must scale as $O(1/d)$. We recommend the normalized parameterization $\tilde{\alpha} = \alpha \cdot d$, which is comparable across dimensions.

% ============================================================
\section{Gaussian Case: Exact Computation}
\label{sec:gaussian}
% ============================================================

For $p_0 = \mathcal{N}(0, I)$ and $p_1 = \mathcal{N}(\mu_1, \Sigma_1)$, all quantities admit closed-form expressions. The OT velocity Jacobian has eigenvalues $(\sigma_i - 1)/((1-t) + t\sigma_i)$ where $\sigma_i = \sqrt{\lambda_i(\Sigma_1)}$. The strain norm is:
\begin{equation}
    \|S^{OT}(t)\|_F^2 = \sum_{i=1}^d \left(\frac{\sigma_i - 1}{(1-t) + t\sigma_i}\right)^2.
\end{equation}

Key observations: (i) $\Omega^{OT} \equiv 0$ (confirming Theorem~\ref{thm:ot-irrotational}); (ii) $S^{OT} = 0$ iff $\Sigma_1 = I$ (the vanishing-strain-and-vorticity regime is achievable only for translated distributions); (iii) strain is highest near $t = 0$ and $t = 1$, motivating time-dependent regularization schedules. These exact expressions serve as a sanity check for our general theory and provide concrete intuition for the strain/vorticity decomposition.

\textbf{Empirical verification of Eulerian second-order structure and exact Lagrangian OT integration.}
We verify Theorem~\ref{thm:ot-second-order} by measuring Euler error at varying step sizes $h$ on exact OT velocity fields with non-OT controls. We test two settings: (i)~Gaussian targets ($p_1 = \mathcal{N}(\mu_1, \Sigma_1)$), where the OT velocity is affine; and (ii)~nonlinear targets with potential $\Psi(x) = \frac{1}{2}\|x\|^2 + \frac{\varepsilon}{4}\sum_i x_i^4$ ($\varepsilon \in \{0.3, 0.5\}$), giving a genuinely nonlinear OT map $T^*(x) = x + \varepsilon x^3$ with non-constant Hessian $\nabla^2\Psi = I + 3\varepsilon\,\mathrm{diag}(x^2)$. Non-OT controls add a time-dependent rotational perturbation to the OT velocity.

\textbf{Gaussian case} (Figure~\ref{fig:euler-convergence}). The affine OT velocity yields Euler errors at machine precision ($\sim\!10^{-14}$) for all step sizes, consistent with exact integration since all truncation error terms vanish. Non-OT flows show standard first-order convergence (slope $\approx 1.0$). The gap exceeds $10^{12}\times$.

\textbf{Nonlinear case} (Figure~\ref{fig:nonlinear-ot}). Despite the nonlinear OT map, Euler errors again reach machine precision across all tested configurations ($d \in \{2, 5\}$, $\varepsilon \in \{0.3, 0.5\}$). Non-OT flows consistently exhibit slope $\approx 1.0$. This stronger-than-predicted result is explained by a property of displacement interpolation that goes beyond Theorem~\ref{thm:ot-second-order}: under OT, each particle's Lagrangian velocity $\dot{\varphi}_t(x) = \nabla\Psi(x) - x$ is time-independent, so the Euler method applied to the particle trajectory is exact at \emph{any} step size --- not merely second-order. The Eulerian coordinate transformation introduces only floating-point-level errors. The $O(h^2)$ regime predicted by Theorem~\ref{thm:ot-second-order} should be interpreted as the generic Eulerian guarantee; the machine-precision behavior here is a stronger consequence of exact displacement interpolation. For learned neural velocity fields that only approximately satisfy the zero-material-derivative condition, one would generally expect the second-order regime rather than exact integration.

\begin{figure}[t]
\centering
\includegraphics[width=\textwidth]{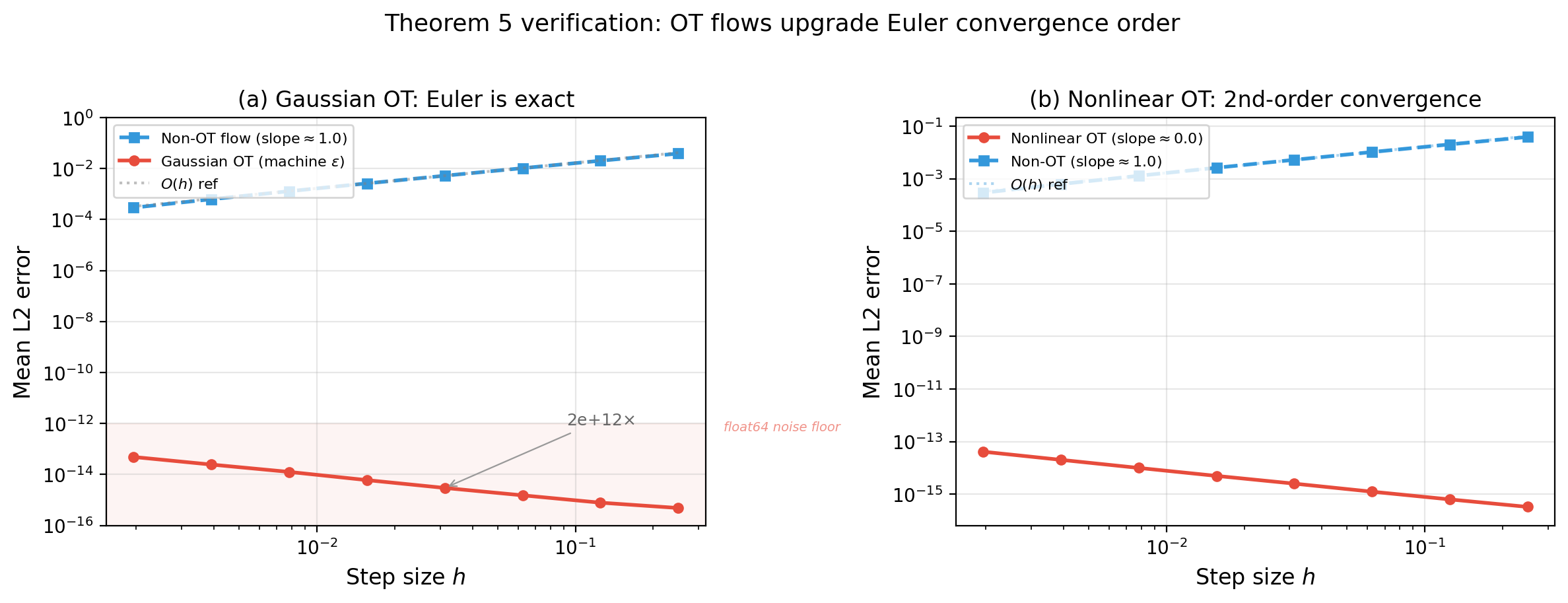}
\caption{Gaussian OT verification. (a)~Euler errors reach machine precision ($\sim\!10^{-14}$) for all step sizes, since the affine OT velocity yields zero truncation error at all orders. Non-OT flows show first-order convergence (slope $\approx 1$). The gap exceeds $10^{12}\times$.  (b)~Same pattern at $d\!=\!10$.}
\label{fig:euler-convergence}
\end{figure}

\begin{figure}[t]
\centering
\includegraphics[width=\textwidth]{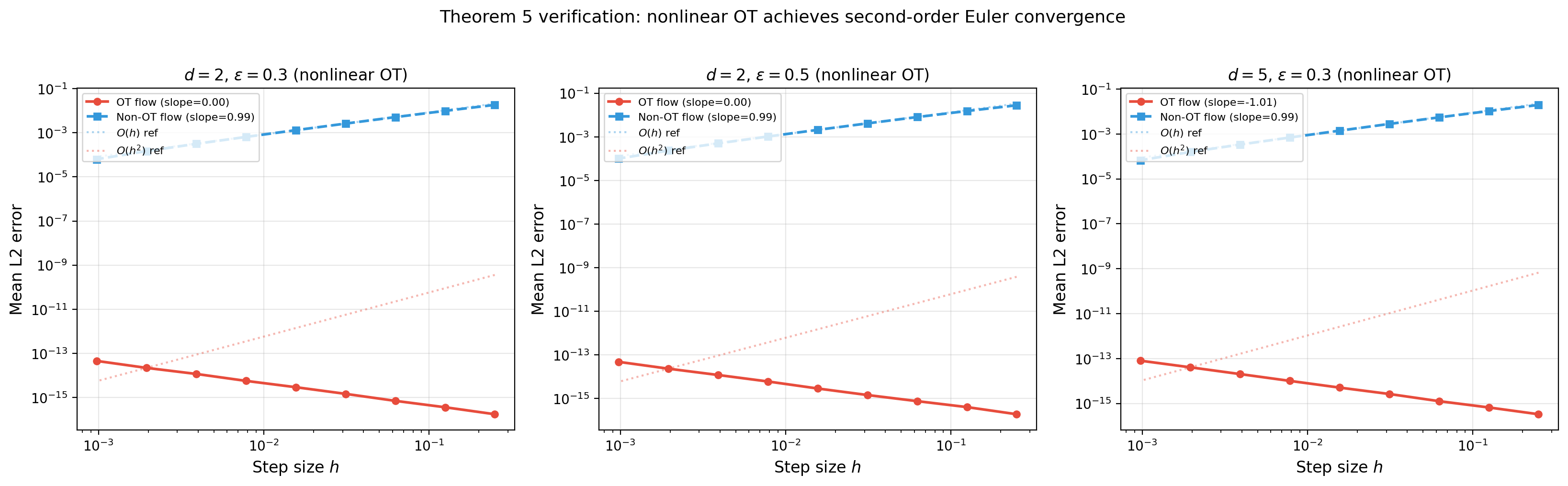}
\caption{Nonlinear OT verification ($\Psi(x) = \frac{1}{2}\|x\|^2 + \frac{\varepsilon}{4}\sum x_i^4$). Despite the nonlinear OT map, Euler errors reach machine precision in all configurations, while non-OT flows follow $O(h)$ (slope $\approx 1$). This confirms that the zero-material-derivative property of displacement interpolation yields exact Lagrangian integration, a result strictly stronger than the second-order bound of Theorem~\ref{thm:ot-second-order}.}
\label{fig:nonlinear-ot}
\end{figure}

% ============================================================
\section{Experiments}
\label{sec:experiments}
% ============================================================

Our experiments validate the theoretical predictions of Sections~\ref{sec:theory}--\ref{sec:regularization}. We focus on three questions: (Q1)~Does reducing $\|S\|_F^2$ via regularization actually reduce Euler integration error? (Q2)~Is strain suppression more valuable than vorticity suppression? (Q3)~Do these effects transfer to realistic settings?

\subsection{2D Synthetic Experiments}

\textbf{Setup.} We train flow matching models on a pinwheel distribution (5 arms with radial twist), using a 5-layer MLP (256 hidden units), 8000 epochs, batch size 512. Integration error is measured as the L2 distance between Euler samples at a given NFE and reference samples generated with NFE$=$500. All methods are monitored for $\|S\|_F^2$ and $\|\Omega\|_F^2$ via exact Jacobian computation. The regularization weights are chosen to probe the theory rather than to optimize a single scalar metric: we first sweep $\alpha$ with $\beta=0$ to isolate the effect of strain suppression, then compare matched mixed settings such as $(\alpha,\beta)=(0.1,0.1)$ and $(0.3,0.05)$ to distinguish isotropic Jacobian penalization from strain-dominant weighting at comparable budget.

\textbf{Q1: Strain reduction $\to$ error reduction.}
Table~\ref{tab:alpha-sweep} shows a systematic sweep of strain regularization weight $\alpha$ (with $\beta = 0$). As $\alpha$ increases, $\|S\|_F^2$ decreases monotonically from 1.93 to 0.45, and the Euler L2 error at NFE$=$5 drops correspondingly from 0.63 to 0.23 --- a $2.7\times$ improvement. FM loss increases modestly (3.17 $\to$ 3.60), confirming the bias-complexity tradeoff of Proposition~\ref{prop:alpha-beta}. The optimal $\alpha$ depends on the target NFE: larger $\alpha$ favors low-NFE regimes.

\begin{table}[t]
\centering
\caption{Effect of strain regularization weight $\alpha$ on 2D pinwheel ($\beta = 0$). L2@$k$: Euler integration error vs.\ NFE$=$500 reference.}
\label{tab:alpha-sweep}
\begin{tabular}{cccccc}
\toprule
$\alpha$ & FM Loss & $\|S\|_F^2$ & L2@5 & L2@10 & Straightness \\
\midrule
0.00 & 3.168 & 1.934 & 0.629 & 0.329 & 0.734 \\
0.01 & 3.175 & 1.919 & 0.604 & 0.316 & 0.743 \\
0.05 & 3.184 & 1.715 & 0.571 & 0.299 & 0.745 \\
0.10 & 3.182 & 1.523 & 0.522 & 0.278 & 0.758 \\
0.30 & 3.278 & 1.041 & 0.386 & 0.200 & 0.775 \\
0.50 & 3.392 & 0.784 & 0.314 & 0.165 & 0.791 \\
1.00 & 3.604 & 0.452 & \textbf{0.228} & \textbf{0.115} & \textbf{0.813} \\
\bottomrule
\end{tabular}
\end{table}

\textbf{NFE comparison.}
Figure~\ref{fig:nfe-comparison} shows Euler error and Sliced Wasserstein distance across NFE values for five methods: FM baseline, VFM ($\alpha=\beta=0.1$), VFM ($\alpha=0.3, \beta=0.05$), VFM ($\alpha=0.5, \beta=0$), and a gradient-field model ($v = \nabla\phi$, $\alpha=0.1$). VFM models consistently achieve lower error, with the gap most pronounced at low NFE. At NFE$=$5, VFM ($\alpha=0.5$) achieves error comparable to the baseline at NFE$\approx$20, a $4\times$ reduction in required steps.

\textbf{Q2: Strain $>$ vorticity.}
The gradient-field model ($\Omega \equiv 0$ by construction) achieves the highest trajectory straightness but \emph{not} the lowest integration error. VFM ($\alpha=0.5$, $\beta=0$) performs better, confirming Corollary~\ref{cor:regimes}: suppressing strain (Regime A) is more important than suppressing vorticity (Regime B). This is visible in the training curves (Figure~\ref{fig:training-curves}), where $\|S\|_F^2$ shows clear separation across methods while $\|\Omega\|_F^2$ remains uniformly small.

\begin{figure}[t]
\centering
\includegraphics[width=\textwidth]{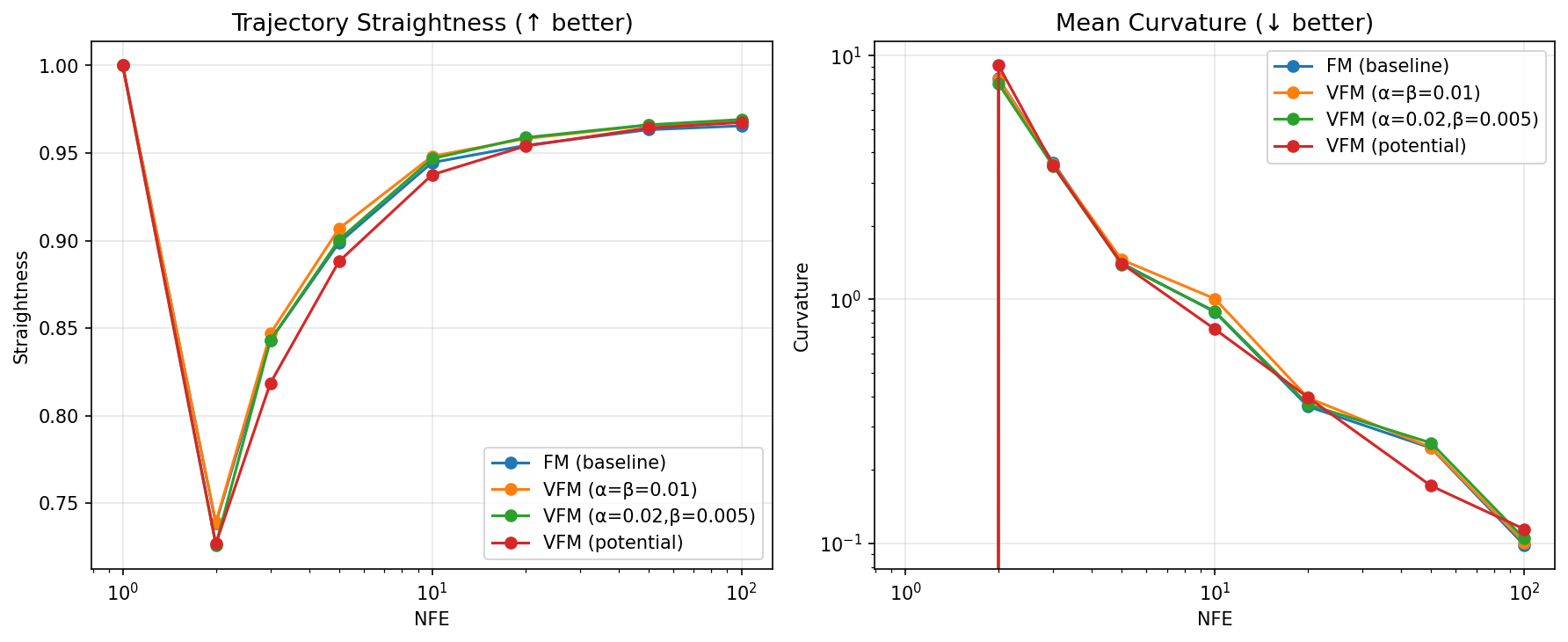}
\caption{NFE vs.\ integration error on 2D pinwheel. Left: L2 error vs.\ NFE$=$500 reference. Middle: Sliced Wasserstein distance. Right: Trajectory straightness. VFM models (warm colors) consistently achieve lower error than the FM baseline (blue), with the gap largest at low NFE. At NFE$=$5, VFM ($\alpha=0.5$) matches the baseline at NFE$\approx$20.}
\label{fig:nfe-comparison}
\end{figure}

\begin{figure}[t]
\centering
\includegraphics[width=\textwidth]{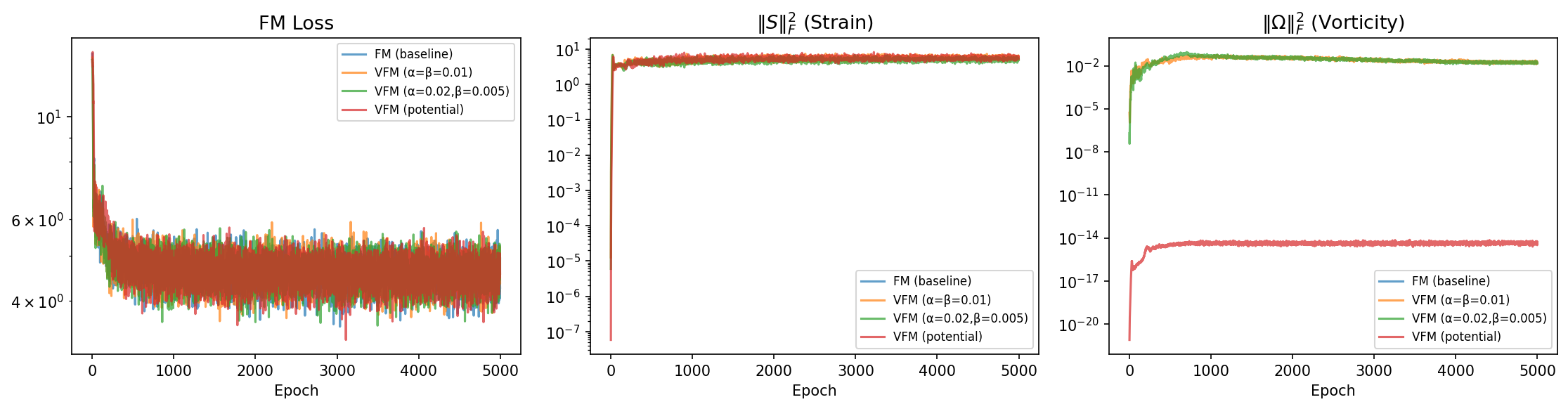}
\caption{Training dynamics on 2D pinwheel. Left: FM loss. Middle: $\|S\|_F^2$ (strain). Right: $\|\Omega\|_F^2$ (vorticity). VFM with larger $\alpha$ achieves lower strain throughout training. Vorticity is naturally small ($\sim\!10^{-3}$) for all methods --- two orders of magnitude below strain --- confirming that learned velocity fields are approximately irrotational.}
\label{fig:training-curves}
\end{figure}

\subsection{CIFAR-10 Experiments}

\textbf{Setup.} To test whether the theoretical predictions extend to high-dimensional settings, we train a SimpleUNet ($\sim$27M parameters) on CIFAR-10 for 200 epochs using standard flow matching (FM baseline, final FM loss 0.174). We then fine-tune with VFM regularization at reduced learning rate ($5 \times 10^{-5}$). FID is computed on 50K generated samples. We ablate over the regularization weight $\alpha$, the inclusion of vorticity penalty $\beta$, and the fine-tuning duration. The scale of $\alpha$ is chosen using the dimensional normalization discussed in Section~\ref{sec:regularization}: since $\|\nabla v\|_F^2 = O(d)$, weights of order $10^{-6}$ on CIFAR-10 correspond to normalized strengths $\tilde{\alpha}=O(10^{-3})$, small enough to avoid overwhelming the FM objective while still producing a measurable change in Jacobian statistics. We use $\beta=0$ to isolate strain regularization, $\beta=\alpha$ for equal-budget comparisons against the standard Jacobian penalty, and $\alpha=0,\beta=10^{-6}$ as a diagnostic $\beta$-only control.

\begin{table}[t]
\centering
\caption{CIFAR-10 unconditional generation: FID ($\downarrow$) at various NFE. All fine-tuned variants start from the same FM baseline (200 epochs) and train for 30 additional epochs at lr$=$5$\times$10$^{-5}$ (unless noted). Bold: improvement over baseline. \colorbox{gray!15}{Shaded}: best configuration.}
\label{tab:cifar10}
\begin{tabular}{llccccc}
\toprule
\textbf{Method} & \textbf{Config} & NFE$=$1 & NFE$=$5 & NFE$=$10 & NFE$=$50 & NFE$=$100 \\
\midrule
FM baseline & 200ep & 362.5 & 49.5 & 25.8 & 15.5 & 14.4 \\
Fine-tune control & no reg, 30ep & 362.8 & 49.5 & 26.0 & 15.4 & 14.4 \\
\midrule
\rowcolor{gray!15}
VFM ft. & $\alpha\!=\!10^{-6},\;\beta\!=\!0$ & 360.6 & \textbf{47.1} & \textbf{22.2} & \textbf{13.6} & \textbf{13.9} \\
VFM ft. & $\alpha\!=\!2\!\times\!10^{-6},\;\beta\!=\!0$ & \textbf{359.4} & \textbf{46.4} & \textbf{22.9} & 19.4 & 20.2 \\
VFM ft. & $\alpha\!=\!\beta\!=\!10^{-6}$ & 361.2 & \textbf{47.3} & \textbf{23.7} & 19.0 & 19.1 \\
VFM ft. & $\alpha\!=\!0,\;\beta\!=\!10^{-6}$ & 361.8 & \textbf{47.4} & \textbf{22.4} & \textbf{13.3} & \textbf{13.1} \\
VFM ft. & $\alpha\!=\!10^{-6}$, 50ep & 360.5 & \textbf{45.9} & \textbf{22.3} & 20.4 & 21.0 \\
\bottomrule
\end{tabular}
\end{table}

\textbf{Q3: Transfer to high dimensions.}
Table~\ref{tab:cifar10} shows FID across configurations. The critical finding is the \textbf{fine-tune control}: 30 epochs of additional training \emph{without} regularization produces FID indistinguishable from the original baseline (e.g., 26.0 vs.\ 25.8 at NFE$=$10). This strongly suggests that the improvements observed in VFM fine-tuning are due to Jacobian regularization rather than to additional training alone.

The best overall configuration ($\alpha = 10^{-6}$, $\beta = 0$, 30 epochs) improves FID at NFE$=$10 from 26.0 to 22.2 ($-$15\% vs.\ control) and at NFE$=$50 from 15.4 to 13.6 ($-$12\%), while preserving high-NFE quality.

\textbf{Ablation insights.}

\emph{(i) Regularization is necessary.} The fine-tune control rules out the hypothesis that improvement comes from additional training. All VFM variants outperform the control at low-to-mid NFE, supporting the interpretation that Jacobian regularization is the main active ingredient in these gains.

\emph{(ii) Bias-complexity tradeoff.} Increasing $\alpha$ from $10^{-6}$ to $2 \times 10^{-6}$ improves low-NFE FID (46.4 vs.\ 47.1 at NFE$=$5) but degrades high-NFE FID (20.2 vs.\ 13.9 at NFE$=$100), confirming the tradeoff predicted by Proposition~\ref{prop:alpha-beta}.

\emph{(iii) Learned flows are naturally near-irrotational.} The $\beta$-only configuration ($\alpha = 0$, $\beta = 10^{-6}$) achieves Reg $\approx 3 \times 10^{-4}$ during training --- three orders of magnitude smaller than the $\alpha$-only Reg of $4 \times 10^{-2}$ --- confirming that $\|\Omega\|_F^2 \ll \|S\|_F^2$ in learned velocity fields. Yet $\beta$-only still improves over the control. We interpret this cautiously: it may indicate an additional implicit regularization effect from the VJP-based computational pathway (which requires \texttt{requires\_grad} on intermediate states), or another mechanism not captured by the present theory. This discrepancy warrants further investigation.

\emph{(iv) Fine-tuning duration matters.} Extending from 30 to 50 epochs improves NFE$=$5 (45.9 vs.\ 47.1) but degrades high-NFE quality (21.0 vs.\ 13.9), showing that the optimal fine-tuning duration also follows a bias-complexity tradeoff.

\emph{(v) Computational overhead.} Each VFM fine-tuning step requires one additional VJP (via \texttt{torch.autograd.grad}) compared to standard FM. When $\alpha = \beta$, only $\|\nabla v\|_F^2 = \E_z[\|J^\top z\|^2]$ is needed (1 VJP per probe vector, typically 1--2 probes). When $\alpha \neq \beta$, estimating $\mathrm{tr}(J^2)$ requires an additional finite-difference JVP per probe (1 extra forward pass). Total overhead: $\sim\!2.5\times$ training time for 1 probe ($\sim$200s/epoch vs.\ $\sim$80s/epoch on A5000). Peak GPU memory increases by $\sim$20\% due to the computation graph retained for the VJP. Inference cost is \emph{identical} to standard FM --- the regularizer is training-only.

\textbf{Scope of results.}
These results should be interpreted as a proof-of-concept. Our model (27M parameters, 200 epochs) is significantly smaller than competitive baselines (e.g., EDM2 at 280M--1.1B parameters). The purpose is to probe whether the theoretical picture remains informative in high dimensions, not to achieve the lowest possible FID.

\textbf{Dimensional scaling.} A notable practical finding is that the raw regularization weight must scale as $\alpha = O(1/d)$ in high dimensions, since $\|\nabla v\|_F^2 = O(d)$. On CIFAR-10 ($d = 3072$), the effective weight $\alpha = 10^{-6}$ corresponds to normalized $\tilde{\alpha} = \alpha d \approx 0.003$, compared to $\tilde{\alpha} \approx 0.6$ ($\alpha = 0.3$, $d = 2$) in 2D.

\textbf{Jacobian statistics along trajectories.}
We measured $\|S\|_F$ and $\|\Omega\|_F$ along Euler trajectories for both the FM baseline and VFM fine-tuned model using Hutchinson estimators. Key findings: (i) $\|S\|_F / \|\Omega\|_F \approx 31\times$ for both models, confirming that learned velocity fields are dominated by strain, consistent with 2D observations; (ii) $\|S\|_F$ increases from $\sim\!55$ at $t=0$ to $\sim\!208$ at $t=0.75$, consistent with the Gaussian analysis predicting strain peaks near endpoints; (iii) VFM fine-tuning reduces average $\|S\|_F$ by $\sim\!14\%$ (181 $\to$ 156), a modest but consistent reduction. Direct measurement of $\lambda_{\max}(S)$ via power iteration at CIFAR-10 scale ($d = 3072$) is computationally expensive and is left to future work.

\subsection{Qualitative Sample Comparison}

Figure~\ref{fig:cifar-qualitative-main} compares unconditional CIFAR-10 samples from the FM baseline, the matched fine-tuning control, and the best VFM configuration across several NFEs using the same visualization protocol. The clearest differences appear at low NFE, where the VFM model produces cleaner global structure and fewer obvious artifacts than either the FM baseline or the no-regularization fine-tuning control. At higher NFE, sample quality is largely preserved, consistent with the quantitative trends in Table~\ref{tab:cifar10}.

\begin{figure}[t]
\centering
\includegraphics[width=\textwidth]{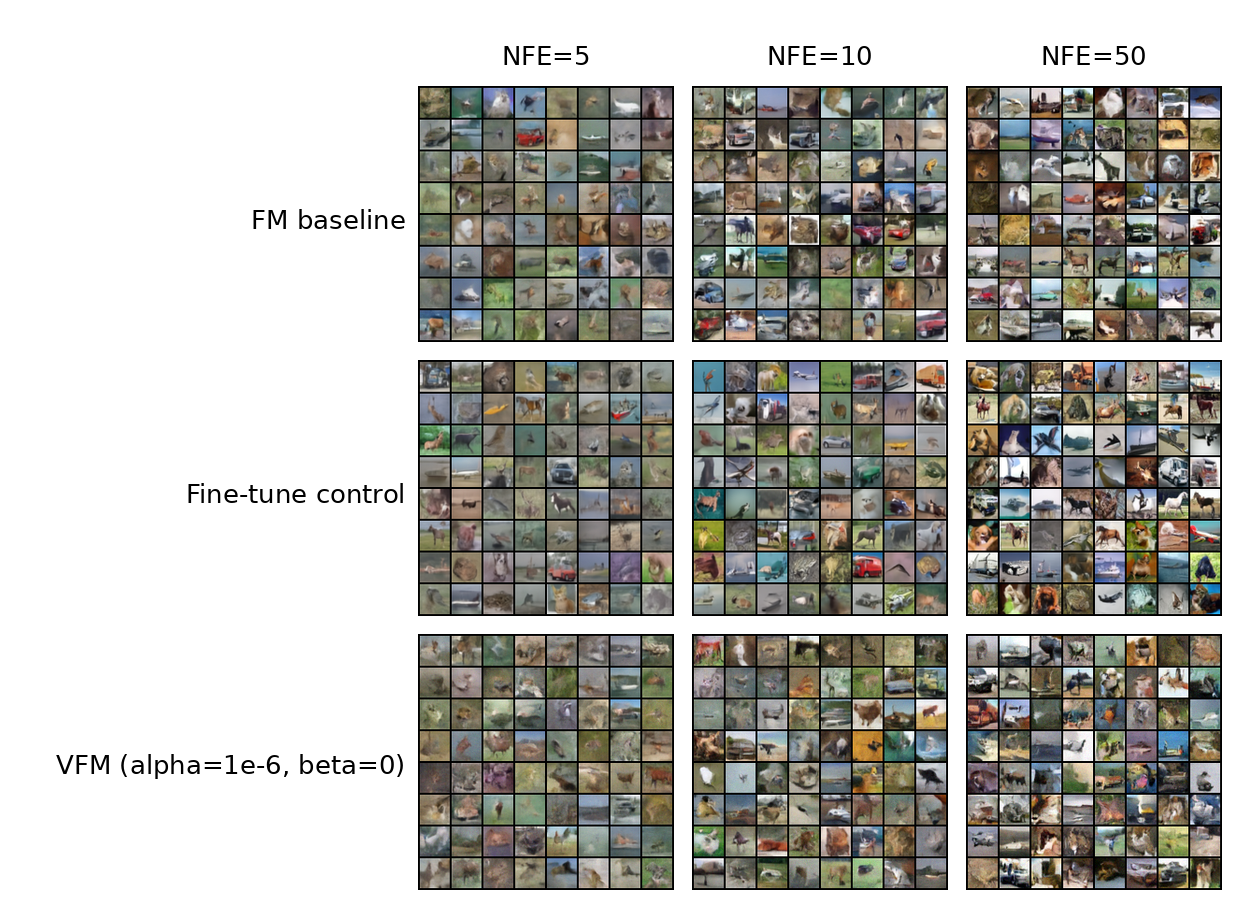}
\caption{Qualitative CIFAR-10 comparison across sampling budgets. Rows show the FM baseline, the matched fine-tuning control without regularization, and the best VFM model ($\alpha=10^{-6}, \beta=0$). Columns correspond to NFE$=$5, 10, and 50. The VFM model shows the most noticeable gains at low NFE, while maintaining competitive visual quality at higher NFE.}
\label{fig:cifar-qualitative-main}
\end{figure}

% ============================================================
\section{Related Work}
\label{sec:related}
% ============================================================

\textbf{Few-step generation.}
Rectified Flow~\citep{liu2023rectified} straightens trajectories via iterative reflow; Sequential Reflow further refines this with coupling-adapted objectives. Consistency Models~\citep{song2023consistency} and their extensions~\citep{ecm2025} enforce temporal self-consistency. MeanFlow~\citep{geng2025meanflow} learns an average velocity field enabling one-step generation via a JVP-based identity. Distillation methods (Flow Generator Matching, Progressive Distillation) train a student to mimic the teacher's multi-step output in a single step. These methods are complementary to our analysis: our theoretical framework explains \emph{why} straight trajectories help (low strain $\to$ no exponential error growth) and predicts when they are insufficient (high vorticity $\to$ persistent truncation error). We do not claim to outperform these methods, but rather to provide the analytical foundation that could inform their design.

\textbf{Training-free sampling improvements.}
Orthogonal to training-time regularization, recent work improves ODE sampling through solver design: adaptive step-size selection, backward-error-informed scheduling, and curvature-aware solvers (e.g., DPM-Solver). These methods reduce discretization error without modifying the velocity field. Our analysis complements this line of work by identifying \emph{which velocity field properties} determine solver performance.

\textbf{Jacobian regularization.}
Jacobian norm penalties have been used for adversarial robustness~\citep{hoffman2019robust} and in neural ODEs for stability~\citep{guglielmi2025improving}. The latter work uses the logarithmic norm to control error growth in classification settings, establishing the relevance of $\mu_2$ for neural ODE stability. Our contribution builds on this by \emph{decomposing} the Jacobian into strain and vorticity and proving their asymmetric roles specifically for generative ODE integration, yielding the prediction $\alpha > \beta$.

\textbf{Regularization in flow-based models.}
Density-weighted regularization~\citep{gamma-fm} suppresses velocity field oscillations in low-density regions via a modified loss geometry. Our approach is complementary: we regularize the Jacobian's internal structure (strain vs.\ vorticity) rather than the spatial distribution of $\|v\|$.

\textbf{Numerical analysis of generative ODEs.}
Recent work derives finite-time convergence bounds for discretized ODE generation~\citep{benton2023error}, typically in terms of the Lipschitz constant $L$ or assumptions on score estimation error. Our logarithmic norm analysis provides strictly tighter bounds ($\mu_+ \leq L$, with equality only when $\Omega = 0$) and reveals the structural decomposition underlying tightness. The connection between OT displacement interpolation and improved Euler accuracy (Theorem~\ref{thm:ot-second-order}) appears to be new in this literature.

% ============================================================
\section{Discussion}
\label{sec:discussion}
% ============================================================

\textbf{Implications for model design.}
Our analysis suggests that future work on few-step generation could benefit from: (i) explicitly monitoring $\|S\|_F^2$ as a diagnostic for integration stiffness (since $\mu_+ = \lambda_{\max}(S)$ governs exponential error growth); (ii) exploring gradient-field parameterizations ($v = \nabla\phi$) that enforce irrotationality by construction; (iii) designing time-dependent regularization schedules informed by the Gaussian analysis of Section~\ref{sec:gaussian}, which shows strain peaks near $t = 0$ and $t = 1$.

\textbf{Practical guidance for choosing $\alpha$ and $\beta$.}
We recommend targeting a regularization-to-loss ratio of Reg/$\mathcal{L}_{\text{FM}} \approx 10\text{--}20\%$. Given the dimensional scaling $\|\nabla v\|_F^2 = O(d)$, a useful starting point is $\alpha \approx 0.15 \cdot \mathcal{L}_{\text{FM}} / \|\nabla v\|_F^2$, which can be estimated from a few training steps. In our experiments, this heuristic yielded $\alpha \approx 0.3$ for $d = 2$ and $\alpha \approx 10^{-6}$ for $d = 3072$, both within the effective range. For $\beta$, our default recommendation is to treat it as a secondary weight: start from $\beta=0$ to test the strain-only prediction, then compare against either $\beta=\alpha$ (equal-budget Jacobian penalty) or a smaller strain-dominant choice such as $\beta \in [0.1\alpha,\,0.5\alpha]$. This is exactly the logic behind our reported settings: $\beta=0$ isolates the theoretically favored component, $\beta=\alpha$ tests whether isotropic Jacobian penalization is competitive, and intermediate choices such as $(\alpha,\beta)=(0.3,0.05)$ probe whether modest vorticity suppression adds value without diluting strain control.

\textbf{Time-dependent weighting.}
The Gaussian analysis (Section~\ref{sec:gaussian}) shows that $\|S(t)\|_F$ peaks near $t = 0$ and $t = 1$. This suggests that time-dependent weights $\alpha(t)$ concentrated at the endpoints could reduce strain where it matters most, while minimizing bias at intermediate times. We did not experiment with $\alpha(t)$ schedules in this work; this is a promising direction that could improve the bias-complexity tradeoff, particularly for high-NFE preservation.

\textbf{Higher-order solvers.}
Our analysis is specific to the Euler method. For Heun's method (second-order), the local truncation error involves third derivatives rather than second, and the strain/vorticity decomposition of these higher-order terms may yield a different asymmetry. We conjecture that the qualitative conclusion (strain matters more than vorticity) persists, since the logarithmic norm governs error propagation regardless of truncation order, but the quantitative gap between regimes may narrow. Empirical investigation with Heun/RK4 solvers is an important direction for future work.

\textbf{Relationship to training-free sampling improvements.}
Orthogonal to model-side regularization, recent work improves sampling via solver-side innovations: adaptive step-size schedules, backward-error-informed scheduling, and curvature-aware solvers. These approaches reduce discretization error \emph{without} modifying the velocity field. Our analysis is complementary: it identifies which properties of $v$ make it amenable (or resistant) to efficient integration, regardless of the solver. In principle, combining model-side strain reduction with solver-side adaptivity could yield compounding benefits.

\textbf{Statistical considerations.}
FID scores are computed on 50K generated samples using a single random seed per configuration. FID has inherent variance ($\pm 0.5\text{--}1.0$ at our quality levels), so differences smaller than $\sim\!2$ points should be interpreted cautiously. The key comparisons in Table~\ref{tab:cifar10} (e.g., control vs.\ $\alpha$-only: 26.0 vs.\ 22.2 at NFE$=$10) exceed this noise floor. Future work should report confidence intervals via multiple seeds.

\textbf{Limitations.}
Our theoretical analysis assumes Euler integration; extension to higher-order solvers remains open (see above). The CIFAR-10 experiments use a modest model (27M parameters); scaling to competitive baselines (EDM2, DiT) would strengthen the empirical evidence and enable direct comparison with few-step methods such as Rectified Flow, Consistency Models, and MeanFlow under standardized protocols. The regularizer operates on $\|S\|_F$, while the amplification bound depends on $\lambda_{\max}(S)$; as discussed in Section~\ref{sec:frob-spectral}, the Frobenius penalty is sufficient but not tight in high dimensions. The $\beta$-only result on CIFAR-10 (Table~\ref{tab:cifar10}) suggests an implicit regularization effect from the VJP computational pathway that is not captured by our theory and warrants further investigation.

\textbf{Broader impact.}
This work provides analytical tools for understanding and improving generative model efficiency. The theoretical insights (strain/vorticity asymmetry, OT irrotationality) are general and may find applications in other ODE-based generative frameworks beyond flow matching.

% ============================================================
\section{Conclusion}
\label{sec:conclusion}
% ============================================================

We have shown that the symmetric and antisymmetric parts of the velocity Jacobian play fundamentally different roles in numerical integration error for flow matching: strain drives exponential error amplification while vorticity contributes only linearly. This asymmetry, formalized through the logarithmic norm, helps explain why some velocity fields require many integration steps while others do not, and provides principled guidance for regularization. We further showed that optimal transport flows are irrotational and have zero material derivative, yielding second-order Euler accuracy in the Eulerian analysis; for exact displacement interpolation, the associated Lagrangian particle dynamics are integrated exactly by Euler, which explains the machine-precision behavior observed on both Gaussian and nonlinear OT flows. Experiments support the main predictions of the theory: on 2D benchmarks, strain regularization yields $2.7\times$ error reduction, while preliminary CIFAR-10 experiments show consistent low-NFE improvements under Jacobian regularization together with a matched fine-tuning control. We hope these analytical tools will complement engineering advances in few-step generation and motivate further connections between numerical analysis, optimal transport, and generative modeling.

% ============================================================
% References
% ============================================================
\bibliographystyle{plainnat}

\begin{thebibliography}{20}

\bibitem[Lipman et al.(2023)]{lipman2023flow}
Y.~Lipman, R.~T.~Q. Chen, H.~Ben-Hamu, M.~Nickel, and M.~Le.
\newblock Flow matching for generative modeling.
\newblock In \emph{ICLR}, 2023.

\bibitem[Liu et al.(2023)]{liu2023rectified}
X.~Liu, C.~Gong, and Q.~Liu.
\newblock Flow straight and fast: Learning to generate and transfer data with rectified flow.
\newblock In \emph{ICLR}, 2023.

\bibitem[Song et al.(2023)]{song2023consistency}
Y.~Song, P.~Dhariwal, M.~Chen, and I.~Sutskever.
\newblock Consistency models.
\newblock In \emph{ICML}, 2023.

\bibitem[Geng et al.(2025)]{geng2025meanflow}
Z.~Geng, M.~Deng, X.~Bai, J.~Z. Kolter, and K.~He.
\newblock Mean flows for one-step generative modeling.
\newblock \emph{arXiv:2505.13447}, 2025.

\bibitem[Geng et al.(2025)]{ecm2025}
Z.~Geng, A.~Pokle, W.~Luo, J.~Lin, and J.~Z. Kolter.
\newblock Consistency models made easy.
\newblock In \emph{ICLR}, 2025.

\bibitem[Brenier(1991)]{brenier1991}
Y.~Brenier.
\newblock Polar factorization and monotone rearrangement of vector-valued functions.
\newblock \emph{Comm. Pure Appl. Math.}, 44(4):375--417, 1991.

\bibitem[Benamou \& Brenier(2000)]{benamou2000}
J.-D.~Benamou and Y.~Brenier.
\newblock A computational fluid mechanics solution to the {M}onge-{K}antorovich mass transfer problem.
\newblock \emph{Numer. Math.}, 84:375--393, 2000.

\bibitem[S\"{o}derlind(2006)]{soderlind2006}
G.~S\"{o}derlind.
\newblock The logarithmic norm. {H}istory and modern theory.
\newblock \emph{BIT Numer. Math.}, 46(3):631--652, 2006.

\bibitem[Hoffman et al.(2019)]{hoffman2019robust}
J.~Hoffman, D.~A. Roberts, and S.~Yaida.
\newblock Robust learning with {J}acobian regularization.
\newblock \emph{arXiv:1908.02729}, 2019.

\bibitem[De Marinis et al.(2025)]{guglielmi2025improving}
A.~De Marinis, N.~Guglielmi, S.~Sicilia, and F.~Tudisco.
\newblock Improving the robustness of neural {ODE}s with minimal weight perturbation.
\newblock \emph{arXiv:2501.10740}, 2025.

\bibitem[Benton et al.(2023)]{benton2023error}
J.~Benton, G.~Deligiannidis, and A.~Doucet.
\newblock Error bounds for flow matching methods.
\newblock \emph{arXiv:2305.16860}, 2023.

\bibitem[Villani(2003)]{villani2003}
C.~Villani.
\newblock \emph{Topics in Optimal Transportation}.
\newblock AMS, 2003.

\bibitem[Eguchi(2025)]{gamma-fm}
S.~Eguchi.
\newblock Implicit geometric regularization in flow matching via density weighted {S}tein operators.
\newblock \emph{arXiv:2512.23956}, 2025.

\end{thebibliography}

% ============================================================
\appendix
\section{Complete Proofs: Separated Error Bound}
\label{app:proofs}
% ============================================================

\subsection{Frobenius Orthogonality}

\begin{proposition}
For $S$ symmetric and $\Omega$ antisymmetric: $\mathrm{tr}(S^\top\Omega) = 0$, hence $\|S + \Omega\|_F^2 = \|S\|_F^2 + \|\Omega\|_F^2$.
\end{proposition}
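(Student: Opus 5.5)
The plan is to prove the trace identity using only the invariance of the trace under transposition and cyclic permutation, and then to expand the squared Frobenius norm via the Frobenius inner product $\langle A,B\rangle_F=\mathrm{tr}(A^\top B)$.

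\emph{Step 1 (the trace vanishes).} Write $c=\mathrm{tr}(S^\top\Omega)$. By the symmetry of $S$, $c=\mathrm{tr}(S\Omega)$. Taking the transpose inside the trace gives
\begin{equation*}
c=\mathrm{tr}\big((S\Omega)^\top\big)=\mathrm{tr}(\Omega^\top S^\top)=\mathrm{tr}(-\Omega S)=-\mathrm{tr}(\Omega S).
\end{equation*}
By cyclicity, $\mathrm{tr}(\Omega S)=\mathrm{tr}(S\Omega)=c$. Hence $c=-c$, and so $c=0$ over the reals. As an independent check I would also compute entrywise: $c=\sum_{i,j}S_{ij}\Omega_{ij}$. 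Swapping the indices $i\leftrightarrow j$ sends each term to $S_{ji}\Omega_{ji}=-S_{ij}\Omega_{ij}$, so the sum equals its own negative. This version has the advantage of not needing cyclicity at all.

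\emph{Step 2 (Pythagorean identity).} Expand
\begin{equation*}
\|S+\Omega\|_F^2=\mathrm{tr}\big((S+\Omega)^\top(S+\Omega)\big)=\|S\|_F^2+\|\Omega\|_F^2+\mathrm{tr}(S^\top\Omega)+\mathrm{tr}(\Omega^\top S).
\end{equation*}
The last cross term equals $\mathrm{tr}\big((S^\top\Omega)^\top\big)=\mathrm{tr}(S^\top\Omega)$, so both cross terms vanish by Step 1, which gives the claim. It is worth remarking that combining this with the unique decomposition $A=S+\Omega$ from the Background section yields $\|A\|_F^2=\|S\|_F^2+\|\Omega\|_F^2$ for any real matrix $A$. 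That is exactly the identity the paper uses to write the Jacobian regularizer as $\|S\|_F^2+\|\Omega\|_F^2$.

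There is no real obstacle here. The only point needing care is working over $\mathbb{R}$. Over $\mathbb{C}$ one would use the conjugate transpose in place of the transpose, and the same argument goes through for Hermitian and anti-Hermitian parts.
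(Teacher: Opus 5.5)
Your proof is correct and follows the paper's argument: both obtain $\mathrm{tr}(S\Omega) = -\mathrm{tr}(S\Omega)$ from transpose-invariance and cyclicity of the trace. Your explicit cross-term expansion in Step~2 spells out the ``hence'' that the paper leaves implicit, and your entrywise check is a valid but unnecessary addition.
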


\begin{proof}
$\mathrm{tr}(S\Omega) = \mathrm{tr}((S\Omega)^\top) = \mathrm{tr}(\Omega^\top S) = \mathrm{tr}(-\Omega S) = -\mathrm{tr}(S\Omega)$, where the last step uses the cyclic property of trace. Hence $\mathrm{tr}(S\Omega) = 0$.
\end{proof}

\subsection{Logarithmic Norm}

\begin{proposition}
\label{prop:log-norm}
$\mu_2(A) = \lambda_{\max}((A+A^\top)/2) = \lambda_{\max}(S_A)$.
\end{proposition}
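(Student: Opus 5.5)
We take the definition $\mu_2(A) = \lim_{h\to 0^+} (\|I + hA\|_2 - 1)/h$ and compute the limit directly through the variational characterization of the spectral norm. The key identity is that, for any unit vector $x$,
\begin{equation*}
\|(I+hA)x\|_2^2 = 1 + 2h\, x^\top A x + h^2 \|Ax\|_2^2 = 1 + 2h\, x^\top S_A x + h^2\|Ax\|_2^2 .
\end{equation*}
Here the antisymmetric part drops out because $x^\top \Omega_A x = 0$: this scalar equals its own transpose $x^\top \Omega_A^\top x = -x^\top\Omega_A x$. This is the mechanism behind the whole statement, namely that only $S_A$ survives at first order in $h$.

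\emph{Upper bound.} Take the supremum over unit $x$. Since $x^\top S_A x \le \lambda_{\max}(S_A)$ and $\|Ax\|_2 \le \|A\|_2$, we get $\|I+hA\|_2^2 \le 1 + 2h\lambda_{\max}(S_A) + h^2\|A\|_2^2$. Then $\sqrt{1+u}\le 1+u/2$ gives $\|I+hA\|_2 - 1 \le h\lambda_{\max}(S_A) + \tfrac{h^2}{2}\|A\|_2^2$. Dividing by $h$ and letting $h\to0^+$ yields $\mu_2(A)\le\lambda_{\max}(S_A)$.

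\emph{Lower bound.} Choose $x$ to be a unit eigenvector of $S_A$ for $\lambda_{\max}(S_A)$, which exists because $S_A$ is symmetric. Then $\|I+hA\|_2^2 \ge \|(I+hA)x\|_2^2 \ge 1 + 2h\lambda_{\max}(S_A)$. For small $h$, a first-order expansion of the square root gives $\|I+hA\|_2 - 1 \ge h\lambda_{\max}(S_A) + O(h^2)$. Explicitly, $\sqrt{1+u} \ge 1 + u/2 - u^2/8$ for $u\ge -1$, where $u = 2h\lambda_{\max}(S_A)$ may be negative but is small. Dividing by $h$ and taking the limit gives $\mu_2(A)\ge\lambda_{\max}(S_A)$.

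The only point needing care is that the limit defining $\mu_2$ exists. The two-sided bounds show the difference quotient converges, so the limit exists and the equality follows; alternatively, existence follows from the standard convexity of $h\mapsto\|I+hA\|$. As a corollary we record the property used in Theorem~\ref{thm:main}, namely $\|I+hA\|_2\le 1+h\mu_2(A)+O(h^2)\le e^{h\mu_2(A)}+O(h^2)$. This is exactly the per-step amplification bound in the error recursion, and it depends only on $S_A$.
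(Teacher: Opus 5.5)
Your argument is correct apart from one slip, corrected below, and it takes a different route from the paper.

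\textbf{Comparison with the paper.} The paper argues in continuous time. For $\dot y = Ay$ it computes $\tfrac{d}{dt}\|y\|^2 = 2\langle y, S_A y\rangle \le 2\lambda_{\max}(S_A)\|y\|^2$, using $\langle y,\Omega_A y\rangle = 0$, and Gr\"onwall then gives $\|e^{tA}\|_2 \le e^{t\lambda_{\max}(S_A)}$. That is a one-sided growth bound on the exact propagator. It never shows that $\lambda_{\max}(S_A)$ is the sharp rate, so it verifies the defining property of a logarithmic norm rather than proving the identity from the standard definition.

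You use the same cancellation, but inside $\|(I+hA)x\|_2^2$, and you work from the difference-quotient definition. You therefore obtain both inequalities and hence the actual equality. Your route also gives directly the per-step bound $\|I+hA\|_2 \le 1 + h\mu_2(A) + O(h^2)$, which is what the Euler recursion in Theorem~\ref{thm:main} actually consumes. The $O(h^2)$ term there cannot be dropped: for antisymmetric $A\neq 0$, $\|I+hA\|_2 = \sqrt{1+h^2\|A\|_2^2} > 1 = 1+h\mu_2(A)$. The paper's recursion in the proof of Theorem~\ref{thm:main} omits this term.

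Conversely, the paper's argument gives the semigroup bound $\|e^{tA}\|_2\le e^{t\mu_2(A)}$ from the background section immediately. From your result this needs one more step, e.g.\ $e^{tA} = \lim_{n\to\infty}(I+\tfrac{t}{n}A)^n$ combined with your per-step bound.

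\textbf{Correction to the lower bound.} The inequality $\sqrt{1+u} \ge 1 + u/2 - u^2/8$ holds only for $u \ge 0$. For $-1 \le u < 0$ the reverse strict inequality holds: at $u=-1/2$, $\sqrt{1/2}\approx 0.707 < 0.71875$. The case $u<0$ is precisely $\lambda_{\max}(S_A)<0$, which you explicitly meant to cover. The conclusion still holds, since $\sqrt{1+u}-1 = u/(1+\sqrt{1+u})$. The cleanest fix avoids square roots altogether: for the unit eigenvector $x$, Cauchy--Schwarz gives $\|I+hA\|_2 \ge \|(I+hA)x\|_2 \ge x^\top(I+hA)x = 1 + h\lambda_{\max}(S_A)$ for every $h>0$. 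The difference quotient is then bounded below by $\lambda_{\max}(S_A)$ exactly, with no error term.
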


\begin{proof}
For the ODE $\dot{y} = Ay$: $\frac{d}{dt}\|y\|^2 = 2\langle y, Ay\rangle = 2\langle y, Sy\rangle + 2\langle y, \Omega y\rangle$. Since $\Omega$ is antisymmetric, $\langle y, \Omega y\rangle = 0$. Therefore $\frac{d}{dt}\|y\|^2 = 2\langle y, Sy\rangle \leq 2\lambda_{\max}(S)\|y\|^2$. By Gr\"{o}nwall: $\|y(t)\| \leq \|y(0)\|e^{\lambda_{\max}(S)t}$. Since this holds for all $y(0)$: $\|e^{tA}\| \leq e^{t\lambda_{\max}(S)}$.
\end{proof}

\subsection{Local Truncation Error Decomposition}

The local truncation error at step $n$ is:
$\tau_n = \frac{h^2}{2}\left[\partial_t v + (\nabla v)v\right] + O(h^3) = \frac{h^2}{2}\left[\partial_t v + Sv + \Omega v\right] + O(h^3)$.
By triangle inequality: $\|\tau_n\| \leq \frac{h^2}{2}(M_t + M_S + M_\Omega) + O(h^3)$.

\subsection{Discrete Gr\"{o}nwall Lemma}

\begin{lemma}
If $a_{n+1} \leq (1+\delta)a_n + B$ with $a_0 = 0$, then $a_n \leq \frac{B}{\delta}((1+\delta)^n - 1)$.
\end{lemma}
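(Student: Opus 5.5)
I will argue by induction on $n$, assuming $\delta > 0$ and $B \ge 0$ (the setting of the Euler analysis, where $\delta = h\mu_+$ or $hL$ and $B$ bounds the local truncation error). The claim for $n$ is $a_n \le \frac{B}{\delta}\bigl((1+\delta)^n - 1\bigr)$. The base case $n = 0$ is immediate, since $a_0 = 0$ and $(1+\delta)^0 - 1 = 0$.

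For the inductive step, suppose the bound holds at $n$. I will use that the recursion coefficient $1+\delta$ is nonnegative, so that multiplying the inductive hypothesis by it preserves the inequality. Then
\begin{equation*}
a_{n+1} \le (1+\delta)a_n + B \le \frac{B}{\delta}\bigl((1+\delta)^{n+1} - (1+\delta)\bigr) + B = \frac{B}{\delta}\bigl((1+\delta)^{n+1} - 1\bigr),
\end{equation*}
where the final equality holds because $-\frac{B}{\delta}(1+\delta) + B = -\frac{B}{\delta}$. This closes the induction.

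An equivalent check is to unroll the recursion to $a_n \le B\sum_{k=0}^{n-1}(1+\delta)^k$ and sum the geometric series. For later use in Theorem~\ref{thm:main}, I will also record the corollary $(1+\delta)^n \le e^{n\delta}$. With $\delta = h\mu_+$, $n = N$ and $Nh = T$, this gives the $e^{\mu_+T}-1$ factor.

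There is no real obstacle here. The only points needing care are the sign conditions: $1+\delta \ge 0$ is required to propagate the inequality, and $\delta \ne 0$ is required for the formula itself. In the limiting case $\delta \to 0$, the bound should be read as $nB$, which is exactly what produces the linear regimes in Corollary~\ref{cor:regimes}.
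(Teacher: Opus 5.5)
Your proof is correct and is the same induction the paper gives: base case from $a_0=0$, then multiply the inductive hypothesis by $1+\delta$ and add $B$, using $-\frac{B}{\delta}(1+\delta)+B=-\frac{B}{\delta}$. The paper's one-line argument uses the side conditions you flag ($1+\delta\ge 0$ to propagate the inequality, $\delta\neq 0$ for the formula) without stating them.
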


\begin{proof}
By induction. Base: $a_0 = 0$. Step: $a_{k+1} \leq (1+\delta)[\frac{B}{\delta}((1+\delta)^k - 1)] + B = \frac{B}{\delta}((1+\delta)^{k+1} - 1)$.
\end{proof}

\subsection{Proof of Theorem~\ref{thm:main}}

Taking norms: $\|e_{n+1}\| \leq \|I + h\nabla v\|\|e_n\| + \|\tau_n\| \leq (1 + h\mu_+)\|e_n\| + \frac{h^2}{2}(M_t + M_S + M_\Omega)$.

Apply Gr\"{o}nwall with $\delta = h\mu_+$, $B = \frac{h^2}{2}(M_t + M_S + M_\Omega)$, $N = T/h$:
$\|e_N\| \leq \frac{h(M_t + M_S + M_\Omega)}{2\mu_+}(e^{\mu_+ T} - 1) + O(h^2)$.

\section{Complete Proofs: Optimal Transport}
\label{app:ot-proofs}

\subsection{Proof of Theorem~\ref{thm:ot-irrotational}}

The displacement interpolation gives $\varphi_t(x) = (1-t)x + t\nabla\Psi(x)$, with Jacobian $\nabla_x\varphi_t = (1-t)I + t\nabla^2\Psi$. Setting $H = \nabla^2\Psi$ (symmetric p.d.), the Eulerian velocity Jacobian is:

$\nabla_y v^{OT} = (H - I)[(1-t)I + tH]^{-1}$.

Since $H$ is symmetric, it has spectral decomposition $H = Q\Lambda Q^\top$. Then:

$(H-I)[(1-t)I + tH]^{-1} = Q(\Lambda - I)[(1-t)I + t\Lambda]^{-1}Q^\top = Q\,\mathrm{diag}\!\left(\frac{\lambda_i - 1}{(1-t)+t\lambda_i}\right)Q^\top$,

which is symmetric. Hence $\Omega^{OT} = 0$. $\qed$

\subsection{Proof of Theorem~\ref{thm:ot-second-order}}

In Lagrangian coordinates, $\dot{\varphi}_t(x) = \nabla\Psi(x) - x$ is time-independent. Therefore $\ddot{\varphi}_t(x) = 0$. The material derivative equals the Lagrangian acceleration: $\frac{Dv}{Dt}\big|_{(t,\varphi_t(x))} = \ddot{\varphi}_t(x) = 0$. Since the $O(h^2)$ truncation error term is $\frac{h^2}{2}\frac{Dv}{Dt} = 0$, the leading error is $O(h^3)$, yielding second-order global convergence. $\qed$

\end{document}